\theoremstyle{plain}
\newtheorem{theorem}{Theorem}[section]
\newtheorem{proposition}[theorem]{Proposition}
\newtheorem{lemma}[theorem]{Lemma}
\theoremstyle{definition}
\theoremstyle{remark}
\def\KL{\mathbf{d}_{\mathrm{KL}}}
\def\CE{\mathbf{d}_{\mathrm{CE}}}
\def\environment{\mathcal{E}}
\def\proxy{\tilde{\environment}}
\def\data{\mathcal{D}}
\def\E{\mathbb{E}}
\def\I{\mathbb{I}}
\def\H{\mathbb{H}}
\def\Regret{{\rm Regret}}
\newcommand{\Yb}{\mathbf{Y}}
\newcommand{\Real}{\mathds{R}}
\newcommand{\Prob}{\mathds{P}}
\DeclareMathOperator*{\argmax}{arg\,max}
\icmltitlerunning{From Predictions to Decisions: the Importance of Joint Predictive Distributions}
\begin{document}

\twocolumn[
\icmltitle{
From Predictions to Decisions: \\The Importance of Joint Predictive Distributions
}



\icmlsetsymbol{equal}{*}

\begin{icmlauthorlist}
\icmlauthor{Zheng Wen}{yyy}
\icmlauthor{Ian Osband}{yyy}
\icmlauthor{Chao Qin}{sch}
\icmlauthor{Xiuyuan Lu}{yyy}
\icmlauthor{Morteza Ibrahimi}{yyy}
\icmlauthor{Vikranth Dwaracherla}{yyy}
\icmlauthor{Mohammad Asghari}{yyy}
\icmlauthor{Benjamin Van Roy}{yyy}
\end{icmlauthorlist}

\icmlaffiliation{yyy}{DeepMind, Mountain View, CA 94043}
\icmlaffiliation{sch}{Columbia Business School, Columbia University, New York, NY 10027}

\icmlcorrespondingauthor{Zheng Wen}{zhengwen@google.com}

\icmlkeywords{Machine Learning, ICML}

\vskip 0.3in
]



\printAffiliationsAndNotice{}  

\begin{abstract}
    A fundamental challenge for any intelligent system is prediction: given some inputs, can you predict corresponding outcomes? Most work on supervised learning has focused on producing accurate marginal predictions for each input. However, we show that for a broad class of decision problems, accurate joint predictions are required to deliver good performance. In particular, we establish several results pertaining to combinatorial decision problems, sequential predictions, and multi-armed bandits to elucidate the essential role of joint predictive distributions. Our treatment of multi-armed bandits introduces an approximate Thompson sampling algorithm and analytic techniques that lead to a new kind of regret bound.
\end{abstract}

\section{Introduction}

We are motivated by the problem of an intelligent agent interacting with an unknown environment.
Although the agent is initially uncertain of the true dynamics of the environment, it might learn to predict aspects of its evolution through observation and learning.  Predictions can then guide decisions aimed to optimize future outcomes.

Most work on supervised learning has focused on producing accurate marginal predictions for each input. For instance, in the image classification problem, the goal is to predict the class label of each image correctly. Though accurate marginal predictions are sufficient for some applications, in this paper, we show that for a broad class of downstream tasks, accurate joint predictions are essential. In particular, we show that even in simple combinatorial and sequential decision problems, accurate marginal predictions are insufficient for effective decisions. Rather, accurate joint predictive distributions are required to achieve good performance. We also show that, in the multi-armed bandit, a classical sequential decision problem, accurate joint predictive distributions enable near-optimal performance.

The cross-entropy loss is perhaps the most widely used performance metric in machine learning and plays a central role in training deep learning systems \cite{lecun2015deep}.  It is typically used to evaluate the quality of marginal predictions but readily extends to address joint predictive distributions. Baselining the cross-entropy loss with an agent-independent constant gives rise to expected KL-divergence. Hence, the cross-entropy loss is equivalent to the expected KL-divergence as a metric for ranking agents.  We will focus in this paper on the use of the expected KL-divergence because the baselining allows for more elegant analysis and framing of results.  In particular, the KL-divergence is closely related to fundamental notions in information theory \cite{cover1999elements}, such as entropy and mutual information. It is worth clarifying that the cross-entropy loss and expected KL-divergence are just particular choices of evaluation metrics.  The importance of evaluating joint predictions, not just marginals, should similarly be apparent under other evaluation metrics.

To elucidate the essential role of joint predictive distributions in driving downstream decision tasks, we establish several results pertaining to the universality of our metric across tasks, combinatorial decision problems, sequential predictions, and mult-armed bandits.  Among these, our study of multi-armed bandits is particularly innovative, entailing the development of a novel variant of the Thompson sampling (TS) algorithm \cite{Thompson1933, russo2017tutorial} and original analytic techniques that lead to a new kind of regret bound (Theorem~\ref{thm:regret_bound}).  Our proposed approximate TS algorithm is a generalization of the standard (exact) TS algorithm and is general in the sense that it does not depend on the agent's uncertainty representation. 
Instead, it only requires that the agent can simulate hypothetical observations, sampled from a joint predictive distribution. The new analytic techniques required to establish our regret bound build on recent progress in information-theoretic analysis of bandit and reinforcement learning algorithms \cite{russo2016information, lu2021reinforcement}.  We believe that the key ideas can also offer new insight into approximate TS algorithms beyond the scope of this paper, including those designed for more general sequential decision problems, such as episodic reinforcement learning.

Also novel is our analysis of sequential prediction.  In this case, we establish a result that highlights the importance of joint predictive distributions even when an agent aims only to produce accurate marginal predictions sequentially, as it streams data.  In particular, for the agent to be able to incrementally update its knowledge representation as it streams, it must also be capable of generating accurate joint, not just marginal, predictive distributions.

The remainder of the paper is organized as follows: in Section~\ref{sec:evaluation_metric}, we introduce the notation for this paper via a standard supervised learning setting.  We also review and discuss the evaluation metric for joint predictive distributions. In Section~\ref{sec:combinatorial}, we illustrate through a simple recommendation system example that accurate marginal predictions are insufficient to guide effective decisions in combinatorial decision problems, and that accurate joint predictive distributions are essential. In Section~\ref{sec:sequential_prediction}, we formally prove that in a sequential decision problem, an agent that updates its parameters incrementally must retain sufficient information about joint predictive distributions if it is to achieve good performance, even if the performance metric only depends on marginal distributions. In Section~\ref{sec:mab}, we consider the classical multi-armed bandit and show that accurate marginal predictions are insufficient to guide good decisions. We also establish a regret bound for our proposed TS algorithm: if the agent produces accurate joint predictive distributions, then the TS algorithm can attain near-optimal regret. Finally, we will discuss the related work in Section~\ref{sec:related_work} and conclude the paper in Section~\ref{sec:conclusion}.

\section{Evaluating joint predictive distributions}
\label{sec:evaluation_metric}

In this section, we introduce the notation for this paper via a standard supervised learning framework. We also motivate and discuss the evaluation metrics used in this paper for joint predictive distributions.

\subsection{Supervised learning}
\label{sec:supervised_learning}
Consider a sequence of pairs $((X_t, Y_{t+1}): t =0,1,2,\ldots)$, where each $X_t$ is a feature vector and each $Y_{t+1}$ is its target label. Feature vectors $(X_t: t =0,1,2,\ldots)$ are i.i.d. according to an input distribution $P_X$.  Each target label $Y_{t+1}$ is independent of all other data, conditioned on $X_t$, and distributed according to $\environment (\cdot | X_t)$.  The conditional distribution $\environment$ is referred to as the \emph{environment}. The environment $\environment$ is random; and this reflects the agent's uncertainty about how labels are generated given features. Note that $\Prob(Y_{t+1} \in \cdot | \environment, X_t) = \environment(\cdot|X_t)$ and $\Prob(Y_{t+1} \in \cdot | X_t) = \E[\environment(\cdot|X_t) | X_t]$.

In supervised learning, we consider an agent that learns about the environment $\environment$ from a training dataset 
\[
\data_T \equiv ((X_t, Y_{t+1}): t =0,1,\ldots, T-1),
\]
and aims to predict the target labels 
\[
Y_{T+1:T+\tau} \equiv (Y_{T+1}, \dots, Y_{T+\tau})
\]
at $\tau$ feature vectors $X_{T:T+\tau-1} \equiv (X_T, \dots, X_{T+\tau-1})$.

Conditioned on the environment $\environment$, a predictive distribution over the target labels is given by
$$P^*_{T+1:T+\tau} \equiv \Prob \left(Y_{T+1:T+\tau} \in \cdot \middle | \environment , X_{T:T+\tau-1} \right).$$
Conditioned instead on the training data, the predictive distribution becomes
$$\overline{P}_{T+1:T+\tau} \equiv \Prob \left(Y_{T+1:T+\tau} \in \cdot \middle | \data_T , X_{T:T+\tau-1} \right).$$
In a sense, $\overline{P}_{T+1:T+\tau}$ represents the result of perfect inference.  Since the associated computations are intractable for environments of interest, we instead consider agents that approximate this predictive distribution. Specifically, we consider agents that represent the approximation in terms of a generative model.  The agent's predictions are parameterized by a vector $\theta_T$ that the agent learns from the training data $\data_T$.  The vector $\theta_T$ is conditionally independent of $\environment$ conditioned on $\data_T$.  For any inputs $X_{T:T+\tau-1}$, $\theta_T$ determines a predictive distribution, which could be used to sample imagined outcomes $\hat{Y}_{T+1:T+\tau}$.  Hence, the agent's $\tau^{\rm th}$-order predictive distribution is given by
$$\hat{P}_{T+1:T+\tau} \equiv \Prob(\hat{Y}_{T+1:T+\tau} \in \cdot | \theta_T, X_{T:T+\tau-1}).$$
When $\tau=1$, we alternatively use $\hat{P}_{T+1}$, $\overline{P}_{T+1}$, and $P^*_{T+1}$ to denote $\hat{P}_{T+1:T+\tau}$, $\overline{P}_{T+1:T+\tau}$, and $P^*_{T+1:T+\tau}$, respectively.

Note that if $\tau=1$, $\hat{P}_{T+1}$ represents a marginal prediction: it predicts the label $Y_{T+1}$ for a single input $X_T$. On the other hand, for $\tau>1$, it represents a joint prediction over labels at $\tau$ input features.

\subsection{Marginal vs. joint predictive distributions}
\label{sec:example_coin}

Before proceeding, we use a simple coin flipping example to illustrate that evaluating marginal and joint predictive distributions can result in very different answers. Suppose that
$(Y_{t+1}: t=0,1,\ldots)$ are generated by repeated tosses of a possibly biased coin with unknown probability $p$ of heads, with $Y_{t+1}=1$ and $Y_{t+1}=0$ indicating heads and tails, respectively.  Consider two agents with different beliefs: Agent 1 assumes $p = 2/3$ and models the outcome of each coin toss as independent conditioned on $p$.
Agent 2 assumes that $p = 1$ with probability $2/3$ and $p=0$ with probability $1/3$; that is, the coin either produces only heads or only tails. Let $\hat{Y}^1_{t+1}$ and $\hat{Y}^2_{t+1}$ denote the outcomes imagined by the two agents.
Despite their differing assumptions, the two agents generate identical marginal predictive distributions: $\Prob(\hat{Y}^1_{t+1}=0) = \Prob(\hat{Y}^2_{t+1}=0) = 1/3$.

On the other hand, the joint predictions of these two agents differ for $\tau>1$:
$$\Prob(\hat{Y}^1_1, \dots, \hat{Y}^1_\tau=0) = 1/3^\tau < 1/3 = \Prob(\hat{Y}^2_1, \dots, \hat{Y}^2_\tau = 0).$$
Evaluating marginal predictions cannot distinguish between the two agents, though for a specific prior distribution over $p$, one agent could be right and the other wrong.
One must evaluate joint predictions to make this distinction.

\subsection{Cross-entropy loss}

Cross-entropy loss is perhaps the most widely used metric in machine learning, though it is typically only used to evaluate marginal predictive distributions. For our supervised learning formulation, an agent's marginal cross-entropy loss takes the form
\begin{equation}
\label{eq:cross_entropy_marginal}
\CE^1 \equiv - \E \big[\log \hat{P}_{T+1}(Y_{T+1}) \big],
\end{equation}
where the expectation is over both $\hat{P}_{T+1}$ and $Y_{T+1}$, and the superscript ``$1$" in $\CE^1$ indicates that this evaluates marginal predictions. Note that the marginal distribution $\hat{P}_{T+1}$ is random because it depends on $\theta_T$ and $X_T$.

It is straightforward to extend the cross-entropy loss to assess joint predictive distributions. Generalizing Equation~\eqref{eq:cross_entropy_marginal}, for any $\tau=1,2,\ldots$, we define the $\tau^{\textrm{th}}$-order cross-entropy loss:
\begin{equation}
\label{eq:cross_entropy_joint}
\CE^\tau \equiv - \E \big[\log \hat{P}_{T+1:T+\tau}(Y_{T+1:T+\tau}) \big],
\end{equation}
where the expectation is over $\hat{P}_{T+1:T+\tau}$
and $Y_{T+1:T+\tau}$. Note that the $\tau^{\textrm{th}}$-order joint distribution $\hat{P}_{T+1:T+\tau}$ is also random, since it depends on $\theta_T$ and $X_{T:T+\tau-1}$.

\subsection{Kullback–Leibler divergence}

One can use the joint cross-entropy loss defined in Equation~\eqref{eq:cross_entropy_joint} to assess joint predictive distributions. However, it can be helpful to offset the metric by a baseline to convert it into the Kullback–Leibler (KL) divergence.  We will take $\overline{P}_{T+1:T+\tau}$, which is the perfect posterior predictive, to be our baseline.  As we will see, since $\overline{P}_{T+1:T+\tau}$ does not depend on the agent, our measure of KL-divergence and the cross-entropy loss are effectively equivalent in the sense that they only differ by a constant that does not depend on the agent. However, KL-divergence exhibits properties that allow for a more elegant mathematical analysis.

The $\tau^{\textrm{th}}$-order expected KL-divergence with respect to $\overline{P}$ is defined by
\begin{equation}
\label{eq:kl_tau}
\KL^\tau \equiv \E \big [ \KL \big( \overline{P}_{T+1:T+\tau}
\big \| \hat{P}_{T+1:T+\tau} \big )\big],
\end{equation}
where the expectation is over the distributions $\overline{P}_{T+1:T+\tau}$ and $\hat{P}_{T+1:T+\tau}$, which depend in turn on the data $\data_T$, the agent parameters $\theta_T$, and the $\tau$ inputs $X_{T:T+\tau-1}$.

Note that KL-divergence is minimized when $\hat{P}_{T+1:T+\tau} = \overline{P}_{T+1:T+\tau}$, with the minimum being zero.  Further, our two metrics are related according to
$$\KL^\tau = \CE^\tau + \E[\log \overline{P}_{T+1:T+\tau}(Y_{T+1:T+\tau})].$$
Since $\overline{P}_{T+1:T+\tau}(Y_{T+1:T+\tau})$ does not depend on the agent, our two metrics rank agents identically. 

An unbiased estimate of cross-entropy loss can be computed based on a test data sample, according to
$$\CE^\tau \approx - \log \hat{P}_{T+1:T+\tau}(Y_{T+1:T+\tau}) \big.$$
The same is not true for $\KL^\tau$, which can only be estimated if also given an estimate of $\E[\log \overline{P}_{T+1:T+\tau}(Y_{T+1:T+\tau})]$.  Hence, $\KL^\tau$ serves only as conceptual tools in our analysis and not an evaluation metric that can be applied with empirical data.  While it ranks agents identically with $\CE^\tau$, $\KL^\tau$ is more natural as a metric since its minimum is zero and it accommodates more elegant analysis.

\subsection{Error in predictions versus environment}

Our $\KL^\tau$ metric assesses error incurred by the predictive distribution $\hat{P}_{T+1:T+\tau}$.  A common approach to generating such a predictive distribution is through estimating a posterior distribution over environments and using that posterior distribution to generate the predictive distribution.  In such a context, $\theta_T$ parameterizes the estimated posterior distribution.  Let $\hat{\environment}$ be an imaginary environment sampled from this posterior distribution.  To offer some intuition for $\KL^\tau$, we consider in this section its relation to the KL-divergence between the distributions of the true and imaginary environments.

Let $\hat{Y}_{T+1:T+\tau}$ denote a sequence of imaginary outcomes, with each $\hat{Y}_{t+1}$ sampled independently from $\hat{\environment}(\cdot | X_t)$.  If the support of the input distribution $P_X$ is exhaustive, the support of the imaginary environment distribution $\Prob(\hat{\environment} \in \cdot | \theta_T)$ contains that of the true environment distribution $\Prob(\environment \in \cdot | \data_T)$, and the environment distributions satisfy suitable regularity conditions, then
$$\lim_{\tau \rightarrow \infty} \KL^\tau = \E \big [\KL \big( \Prob(\environment \in \cdot | \data_T )
\big \|
\Prob(\hat{\environment} \in \cdot | \theta_T )
\big) \big].$$
In other words, under certain technical conditions, as the number $\tau$ of test data pairs grows, $\KL^\tau$ converges to the error in the estimated posterior distribution over environments, measured in terms of KL-divergence.

One might wonder why we should use $\KL^\tau$ rather than the KL divergence between the true and imaginary environments
\begin{equation}
\label{eq:kl_env}
\E \big [\KL \big( \Prob(\environment \in \cdot | \data_T )
\big \|
\Prob(\hat{\environment} \in \cdot | \theta_T )
\big) \big]
\end{equation}
to evaluate the agents.  There are several reasons for it. First, practical agent designs often do not satisfy the requisite regularity conditions, and hence \eqref{eq:kl_env} becomes infinite.  For example, it is common to approximate the posterior distribution $\environment$ using an ensemble of environment models (see, e.g., \citet{lu2017ensemble}).  Such an ensemble represents a distribution with finite support though the posterior may have infinite support.  On the other hand, for any finite $\tau$, $\KL^\tau$ is finite.  Second, $\CE^\tau$, which is equivalent to $\KL^\tau$ up to a constant, can be computed based on data, whereas computing \eqref{eq:kl_env} requires access to the posterior distribution of $\environment$.  Finally, as we will establish later, $\KL^\tau$ with finite $\tau$ is sufficient to support effective decisions in downstream tasks such as multi-armed bandits.

\subsection{Universality of $\KL^\tau$}
\label{sec:universal_kl}

We now show that, for any $\tau$, accuracy in terms of $\KL^\tau$ is sufficient to guarantee an effective decision if the decision is judged in relation only to $Y_{T+1:T+\tau}$.  In particular, suppose an action $a$ selected from a set $\mathcal{A}$ results in an expected reward 
\begin{align*}
&\E[r(a,Y_{T+1:T+\tau}) | \data_T, X_{T:T+\tau-1}] \\
&= \sum_{y_{T+1:T+\tau}} \overline{P}_{T+1:T+\tau}(y_{T+1:T+\tau}) r(a, y_{T+1:T+\tau}),
\end{align*}
where $r$ is a reward function with range $[0,1]$.  The following result bounds the loss in expected reward of a decision that is based on the estimate $\hat{P}_{T+1:T+\tau}$ instead of the posterior $\overline{P}_{T+1:T+\tau}$.
\begin{proposition}
\label{prop:universality}
If an action $\hat{a} \in \mathcal{A}$ maximizes 
$$\sum_{y_{T+1:T+\tau}} \hat{P}_{T+1:T+\tau}(y_{T+1:T+\tau}) r(a, y_{T+1:T+\tau})$$
then
\begin{align*}
\E[r(\hat{a}, Y_{T+1:T+\tau})] \geq \max_{a\in \mathcal{A}} \E[r(a, Y_{T+1:T+\tau})] - \sqrt{2 \KL^\tau}.
\end{align*}
\end{proposition}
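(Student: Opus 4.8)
The plan is to establish the bound first conditionally on the training data $\data_T$, the agent parameters $\theta_T$, and the inputs $X_{T:T+\tau-1}$, and then integrate. Fix this conditioning and abbreviate the conditional expected rewards by $\overline{R}(a) = \sum_{y} \overline{P}_{T+1:T+\tau}(y)\, r(a, y)$ and $\hat{R}(a) = \sum_{y} \hat{P}_{T+1:T+\tau}(y)\, r(a, y)$, writing $a^* \in \argmax_a \overline{R}(a)$ for an action optimal under the true posterior predictive while $\hat{a}$ maximizes $\hat{R}$ by hypothesis. The first step is the standard observation that, because $r$ takes values in $[0,1]$, for every $a$
$$\big|\overline{R}(a) - \hat{R}(a)\big| = \Big| \sum_y \big(\overline{P}_{T+1:T+\tau}(y) - \hat{P}_{T+1:T+\tau}(y)\big) r(a,y)\Big| \le d_{\mathrm{TV}}\big(\overline{P}_{T+1:T+\tau}, \hat{P}_{T+1:T+\tau}\big),$$
where $d_{\mathrm{TV}}$ denotes total variation distance $\tfrac12 \sum_y |\overline{P}_{T+1:T+\tau}(y) - \hat{P}_{T+1:T+\tau}(y)|$; the worst-case reward vector places weight $1$ exactly on the outcomes where $\overline{P}$ exceeds $\hat{P}$. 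Getting this accounting right (the range $[0,1]$ yielding $d_{\mathrm{TV}}$ rather than $2\,d_{\mathrm{TV}}$) is precisely what produces the stated constant.

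Next I would run the usual decision-regret decomposition,
$$\overline{R}(a^*) - \overline{R}(\hat a) = \big[\overline{R}(a^*) - \hat{R}(a^*)\big] + \big[\hat{R}(a^*) - \hat{R}(\hat a)\big] + \big[\hat{R}(\hat a) - \overline{R}(\hat a)\big].$$
The middle bracket is $\le 0$ since $\hat a$ maximizes $\hat R$, and the two outer brackets are each bounded by $d_{\mathrm{TV}}(\overline{P}_{T+1:T+\tau}, \hat{P}_{T+1:T+\tau})$ from the first step, so the conditional regret is at most $2\, d_{\mathrm{TV}}(\overline{P}_{T+1:T+\tau}, \hat{P}_{T+1:T+\tau})$. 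Applying Pinsker's inequality, $d_{\mathrm{TV}} \le \sqrt{\tfrac12\, \KL(\overline{P}_{T+1:T+\tau}\|\hat{P}_{T+1:T+\tau})}$, converts this into the conditional statement $\overline{R}(a^*) - \overline{R}(\hat a) \le \sqrt{2\,\KL(\overline{P}_{T+1:T+\tau}\|\hat{P}_{T+1:T+\tau})}$.

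Finally I would take the expectation over $(\data_T, \theta_T, X_{T:T+\tau-1})$ and reconcile the conditional quantities with the unconditional ones in the statement. Since the expected reward in the proposition is the one induced by $\overline{P}_{T+1:T+\tau}$, I would first argue that $Y_{T+1:T+\tau}$ is conditionally independent of $\theta_T$ given $(\data_T, X_{T:T+\tau-1})$ — this follows from the modeling assumption that $\theta_T$ is conditionally independent of $\environment$ given $\data_T$ — so that $\E[r(\hat a, Y_{T+1:T+\tau})] = \E[\overline{R}(\hat a)]$ and, for any fixed $a$, $\E[\overline{R}(a)] = \E[r(a, Y_{T+1:T+\tau})]$. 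Taking expectations of the conditional bound and applying Jensen's inequality to pull the expectation inside the concave square root gives $\E[\overline{R}(\hat a)] \ge \E[\overline{R}(a^*)] - \sqrt{2\,\E[\KL(\overline{P}_{T+1:T+\tau}\|\hat{P}_{T+1:T+\tau})]} = \E[\overline{R}(a^*)] - \sqrt{2\,\KL^\tau}$. The argument closes by noting $\E[\overline{R}(a^*)] = \E[\max_a \overline{R}(a)] \ge \max_a \E[\overline{R}(a)] = \max_a \E[r(a, Y_{T+1:T+\tau})]$.

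I expect the genuine mathematical content to be light, since the core is a textbook decision-regret-plus-Pinsker argument; the three things to get right are the $[0,1]$-range accounting that pins down the constant, the Jensen step that moves the expectation through the square root, and the measure-theoretic bookkeeping (the $Y \perp \theta_T$ conditional independence and the $\E[\max]\ge\max\E$ step) that converts the conditional bound into the unconditional inequality as stated. The last of these is the likeliest place for a subtle slip.
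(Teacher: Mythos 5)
Your proof is correct, and although it rests on the same two tools the paper credits (Pinsker's inequality and Jensen's inequality), your decomposition differs from the paper's in a substantive way---and is in fact the more careful of the two. Writing $\overline{R}(a) = \sum_{y} \overline{P}_{T+1:T+\tau}(y)\, r(a,y)$ and $\hat{R}(a) = \sum_{y} \hat{P}_{T+1:T+\tau}(y)\, r(a,y)$, the paper bounds only the single bracket $|\overline{R}(a^*) - \hat{R}(a^*)|$, and does so by the full $\ell_1$ distance $\|\overline{P}_{T+1:T+\tau} - \hat{P}_{T+1:T+\tau}\|_1 \leq \sqrt{2\,\KL(\overline{P}_{T+1:T+\tau} \| \hat{P}_{T+1:T+\tau})}$; it then concludes $\overline{R}(a^*) \leq \hat{R}(\hat{a}) + \sqrt{2\,\KL(\overline{P}_{T+1:T+\tau} \| \hat{P}_{T+1:T+\tau})}$ and ``takes expectations.'' That last step tacitly identifies $\E[\hat{R}(\hat{a})]$---the expected reward of $\hat{a}$ against the agent's \emph{imagined} outcomes---with $\E[r(\hat{a}, Y_{T+1:T+\tau})] = \E[\overline{R}(\hat{a})]$, the expected reward against the true labels; these are not equal in general, and since $\hat{a}$ is selected to maximize $\hat{R}$, the selection bias pushes the discrepancy in the unfavorable direction. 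Your three-term decomposition supplies exactly the bracket $\hat{R}(\hat{a}) - \overline{R}(\hat{a})$ that the paper's write-up elides, and your sharper accounting---each outer bracket bounded by the total variation $d_{\mathrm{TV}} = \tfrac{1}{2}\|\cdot\|_1 \leq \sqrt{\tfrac{1}{2}\KL(\cdot\|\cdot)}$ rather than by the full $\ell_1$ norm, which is precisely where the $[0,1]$ range of $r$ enters---is what keeps the sum of the two brackets at the stated $\sqrt{2\,\KL^\tau}$; patching the missing bracket using the paper's own $\ell_1$-based constant would instead yield $2\sqrt{2\,\KL^\tau}$. Your closing bookkeeping (the conditional independence of $Y_{T+1:T+\tau}$ and $\theta_T$ given $(\data_T, X_{T:T+\tau-1})$, Jensen's inequality to move the expectation inside the square root, and $\E[\max_a \overline{R}(a)] \geq \max_a \E[\overline{R}(a)]$) coincides with what the paper does or implicitly needs. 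In short: same skeleton, but your version is the one that rigorously delivers the claimed constant.
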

This results follows from Pinsker's inequality and Jensen's inequality. A proof is provided in  Appendix~\ref{app:proof_for_universality}.

Proposition~\ref{prop:universality} ensures that if a decision maximizes an approximation to expected reward, with the expectation approximated using the predictive distribution $\hat{P}_{T+1:T+\tau}$, then the action will be within $\sqrt{2 \KL^\tau}$ of what is achievable given the posterior predictive distribution $\overline{P}_{T+1:T+\tau}$.  In this sense, $\KL^\tau$ is a universal evaluation metric: its value ensures a level of performance in any decision problem.

\section{Combinatorial decision problems}
\label{sec:combinatorial}

In this section, we will present a simple example that highlights the importance of the joint predictive distributions in combinatorial decision problems.
Importantly, we will show that there can be benefit to considering a joint predictive distribution even when the data are i.i.d. conditioned on the environment.

Consider the problem of a customer interacting with a recommendation system that proposes a selection of $K>1$ movies from an inventory of $N$ movies $X_1,..,X_N$.
Each $X_i \in \Real^d$ describes the features of movie $i$, and $d$ is the feature dimension. We model the probability that a user will enjoy movie $i$ by a logistic model $Y_i \sim {\rm logit}(\phi_*^T X_i)$, where ${\rm logit}$ is the standard logistic function.
Note that $\phi_* \in \Real^d$ describes the preferences of the user, which is not fully known to the recommendation system and can be viewed as a random variable. We will show that, in order to maximize the probability that the user enjoys at least one of the $K > 1$ recommended movies, it is insufficient to examine the marginal predictive distributions.

To make this example concrete, we consider the case where the user $\phi_*$ is drawn from two possible user types $\{ \phi_1, \phi_2 \}$, and the recommendation system should propose $K=2$ movies from an inventory $\{X_1, X_2, X_3, X_4\}$.
Table \ref{table:logits} presents the numerical values for each of these vectors in this problem and their associated probabilities of selection implied by their logit functions correct to two decimal places.
These values are chosen to set up a tension between optimization over marginal (each $X_i$ individually) and joint (pairs of $X_i, X_j$) predictions.
An agent that optimizes the expected probability for each movie individually will end up recommending the pair $(X_3, X_4)$ to an unknown $\phi \sim {\rm Unif}(\phi_1, \phi_2)$.
This means that, whether a user is type $\phi_1$ or $\phi_2$, there is a greater than 10\% chance they do not like either movie.
By contrast, an agent that considers the joint predictive distribution for $\tau \ge K=2$ can see that instead selecting the pair $(X_1, X_2)$ will give close to 100\% certainty that the user will enjoy one of the movies.

\begin{table*}
\begin{center}
\begin{tabular}{r|c|c|c|c}
  & $X_1 = (10, -10)$ & $X_2 = (-10, 10)$ & $X_3 = (1, 0)$ & $X_4 = (0, 1)$ \\
\hline
$\phi_1 = (1, 0)$ & 1 & 0 & 0.73 & 0.5 \\
$\phi_2 = (0, 1)$ & 0 & 1 & 0.5 & 0.73 \\
\hline
$\phi \sim {\rm Unif}(\phi_1, \phi_2)$ & 0.5 & 0.5 & 0.62 & 0.62
\end{tabular}
\end{center}
\caption{Expected probability to watch a movie under different user features, correct to two decimal places.}
\label{table:logits}
\end{table*}

This didactic example is designed to be maximally simplistic, and in some sense it labours an obvious point. In combinatorial decision problems where the outcome depends on the joint predictive distribution, optimization based on the marginal predictive distribution is insufficient to guarantee good decisions. By contrast, as shown in Section \ref{sec:universal_kl}, if your decision depends only on the $\tau^{\rm th}$-order predictive, then attaining small $\KL^\tau$ is sufficient for good performance.
What is perhaps interesting about this example however, is that this coupling can occur even where, conditioned on the true environment, the data generating process is actually i.i.d.
The key point is that, when the true underlying environment is unknown, a coupling in future observations is introduced.
As such, examining the joint predictive distribution can be essential for good performance.
In the rest of this paper, we will show that these distinctions become even more important as we move towards sequential decision problems.

\section{Sequential predictions}
\label{sec:sequential_prediction}

In this and next section, we will establish some theoretical results to justify the importance of joint predictive distributions in sequential decision problems.
We first consider sequential prediction problems, which is a subclass of sequential decision problems where the agent's predictions (actions) do not influence its future observations. In particular, we show that in a standard sequential prediction problem, for agents with incremental updates to achieve good performance, it is necessary for them to retain significant portion of information about joint predictive distributions. This is true even if the performance metric only depends on marginal distributions.

The considered sequential prediction problem is formulated as follows: data pairs $(X_t, Y_{t+1})$ arrive sequentially, one at a time. At each time $t$, the agent needs to compute parameters $\theta_t$ based on previously observed data pairs $\data_t = (X_0, Y_1, X_1, \ldots, X_{t-1}, Y_t)$. Then, a new data pair $(X_t, Y_{t+1})$ arrives. We assume that the feature vector $X_t$'s are unconditionally independent, but not necessarily identically distributed. The target label $Y_{t+1}$ is conditionally independently sampled from the distribution
$\environment \left( \cdot \middle | X_t \right )$, where $\environment$ is the environment. The agent's objective is to minimize the expected cumulative KL-divergence in the first $T$ time steps:
\begin{equation}
   \label{eq:cum_kl}
    \textstyle \sum_{t=0}^{T-1} \E \big [ \KL \big (\overline{P}_{t+1} \big \| \hat{P}_{t+1} \big ) \big], 
\end{equation}
where 
\begin{align}
    \bar{P}_{t+1} = & \, \Prob (Y_{t+1} \in \cdot | \, \data_t, X_t) \nonumber \\
    \hat{P}_{t+1} =& \, \Prob (\hat{Y}_{t+1} \in \cdot | \, \theta_t , X_t) \nonumber
\end{align}
for all time $t$. Note that this cumulative KL-divergence \eqref{eq:cum_kl} only depends on the marginal distributions $\overline{P}_{t+1}$ and $\hat{P}_{t+1}$. Also note that this performance metric is $0$ if the agent predicts the exact posterior at each time $t$.

We consider a setting where an agent needs to incrementally update its parameters as data arrive. Specifically, 
at time $t=0$, the agent chooses its parameters $\theta_0$ based on its prior knowledge; and then at each time $t=0,1,\ldots$, the agent updates its parameters incrementally by sampling from a distribution that only depends on $\theta_t$, $(X_t, Y_{t+1})$, and $t$:
\begin{equation}
\theta_{t+1} \sim \Prob \left(\theta_{t+1} \in \cdot \middle | \theta_t, X_t, Y_{t+1}, t \right). \label{eq:incremental_update}
\end{equation}
In other words, conditioning on $
(\theta_t, X_t, Y_{t+1})$, $\theta_{t+1}$ is independent of the dataset $\data_t$ and the environment $\environment$. Note that the incremental update rule in equation~\eqref{eq:incremental_update} is general: in particular, $\data_t$ could itself be recorded in $\theta_t$.  This would allow $\theta_{t+1}$ to depend on $\data_t$ in an arbitrary manner.  However, such an approach can be impractical when there is a high volume of data.  In particular, one may want to avoid sifting through a growing $\data_t$ at each time step. In many practical applications, it is desirable for the agent to update $\theta_{t+1}$ with fixed memory space and fixed per-step computational complexity, such as the standard SGD \cite{goodfellow2016deep} and Adam \cite{kingma2014adam} algorithms do.

We have the following result for the considered sequential prediction problem:

\begin{theorem}
\label{thm:info_retain}
For an agent with incremental update \eqref{eq:incremental_update}, for any time $t=0,1, \ldots, T-1$ and any $\epsilon \geq 0$, if 
\[
\textstyle \sum_{t'=t}^{T-1} \E \big [ \KL \big(\bar{P}_{t'+1} \, \big \| \, \hat{P}_{t'+1} \big) \big] \leq \epsilon,
\]
then we have 
\[
\I \left(Y_{t+1:T}; \theta_t  \, \middle | \, X_{t:T-1} \right)
\geq \I \left(Y_{t+1:T}; \data_t \, \middle | \, X_{t:T-1} \right) - \epsilon.  
\]

\end{theorem}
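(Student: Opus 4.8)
The plan is to prove the equivalent statement that the information gap
$\I(Y_{t+1:T}; \data_t \mid X_{t:T-1}) - \I(Y_{t+1:T}; \theta_t \mid X_{t:T-1})$ is at most $\epsilon$. Note that the reverse inequality is automatic: since $\theta_t$ is a noisy function of $\data_t$ with internal randomness independent of the future labels, $Y_{t+1:T} - \data_t - \theta_t$ forms a Markov chain given the features, so the data-processing inequality already gives $\I(Y_{t+1:T}; \theta_t \mid X_{t:T-1}) \le \I(Y_{t+1:T}; \data_t \mid X_{t:T-1})$. The content of the theorem is thus the matching lower bound, i.e. that the compression from $\data_t$ to $\theta_t$ loses at most $\epsilon$ nats. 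First I would apply the chain rule for mutual information to both terms, $\I(Y_{t+1:T}; Z \mid X_{t:T-1}) = \sum_{t'=t}^{T-1} \I(Y_{t'+1}; Z \mid X_{t:T-1}, Y_{t+1:t'})$ for $Z \in \{\data_t, \theta_t\}$, so that the gap becomes a sum of per-step gaps. Because future feature vectors are independent of everything relevant, I can drop $X_{t'+1:T-1}$ from the conditioning and then merge $\data_t$ with the intermediate pairs to recognize $(\data_t, X_{t:t'}, Y_{t+1:t'})$ as exactly $(\data_{t'}, X_{t'})$; the $t'$-th gap then reduces to the conditional-entropy difference $\H(Y_{t'+1} \mid \theta_t, X_{t:t'}, Y_{t+1:t'}) - \H(Y_{t'+1} \mid \data_{t'}, X_{t'})$.

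The crux is the next step, where the incremental-update assumption does the work. Since $\theta_{t'}$ is produced from $\theta_t$ and the data $(X_{t:t'-1}, Y_{t+1:t'})$ by composing the Markov updates \eqref{eq:incremental_update}, whose internal randomness is independent of $Y_{t'+1}$, further conditioning on $\theta_{t'}$ changes nothing, i.e. $Y_{t'+1} \perp \theta_{t'} \mid (\theta_t, X_{t:t'}, Y_{t+1:t'})$; discarding the remaining conditioning variables can only increase entropy, giving $\H(Y_{t'+1} \mid \theta_t, X_{t:t'}, Y_{t+1:t'}) \le \H(Y_{t'+1} \mid \theta_{t'}, X_{t'})$. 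This is the step that converts a claim about the single parameter $\theta_t$ into one about the evolved parameter $\theta_{t'}$ that actually generates the time-$t'$ prediction, and I expect it to be the main obstacle to state cleanly, since it is exactly where incrementality (as opposed to arbitrary dependence on $\data_t$) is used. It then suffices to bound $\H(Y_{t'+1} \mid \theta_{t'}, X_{t'}) - \H(Y_{t'+1} \mid \data_{t'}, X_{t'})$, which, using $Y_{t'+1} \perp \theta_{t'} \mid (\data_{t'}, X_{t'})$, equals the conditional mutual information $\I(Y_{t'+1}; \data_{t'} \mid \theta_{t'}, X_{t'})$.

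Finally I would translate this conditional mutual information into the KL term of the hypothesis. Writing $\widetilde{P}_{t'+1} \equiv \Prob(Y_{t'+1} \in \cdot \mid \theta_{t'}, X_{t'})$ and using $\Prob(Y_{t'+1} \in \cdot \mid \data_{t'}, \theta_{t'}, X_{t'}) = \overline{P}_{t'+1}$, the standard identity gives $\I(Y_{t'+1}; \data_{t'} \mid \theta_{t'}, X_{t'}) = \E[\KL(\overline{P}_{t'+1} \,\|\, \widetilde{P}_{t'+1})]$. Since $\widetilde{P}_{t'+1} = \E[\overline{P}_{t'+1} \mid \theta_{t'}, X_{t'}]$ is precisely the $\sigma(\theta_{t'}, X_{t'})$-measurable minimizer of $Q \mapsto \E[\KL(\overline{P}_{t'+1} \,\|\, Q)]$ (Gibbs' inequality applied conditionally), and $\hat{P}_{t'+1}$ is itself $\sigma(\theta_{t'}, X_{t'})$-measurable, we obtain $\E[\KL(\overline{P}_{t'+1} \,\|\, \widetilde{P}_{t'+1})] \le \E[\KL(\overline{P}_{t'+1} \,\|\, \hat{P}_{t'+1})]$. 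Chaining the three per-step bounds and summing over $t' = t, \dots, T-1$ yields $\I(Y_{t+1:T}; \data_t \mid X_{t:T-1}) - \I(Y_{t+1:T}; \theta_t \mid X_{t:T-1}) \le \sum_{t'=t}^{T-1} \E[\KL(\overline{P}_{t'+1} \,\|\, \hat{P}_{t'+1})] \le \epsilon$, which rearranges to the claim. The one direction-sensitive subtlety to flag is that the minimization-over-$Q$ argument needs the KL oriented as $\KL(\overline{P} \,\|\, \cdot)$ — which is exactly the orientation of the performance metric \eqref{eq:cum_kl}.
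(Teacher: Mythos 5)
Your proposal is correct and follows essentially the same route as the paper's proof: your per-step chain-rule decomposition, your use of incrementality to pass from $\theta_t$-conditioned predictions to $\theta_{t'}$-conditioned ones, and your conditional Gibbs-inequality step correspond one-to-one with the paper's chain rule of KL divergence (step (a)), its step (b), and its Lemma~\ref{lemma:conditional-predictive} on the optimality of the conditional predictive distribution. The only presentational difference is that you work in entropy/mutual-information language and target the information gap directly, whereas the paper bounds the expected joint KL divergence $\E \big[ \KL \big( \Prob(Y_{t+1:T} \in \cdot \mid \data_t, X_{t:T-1}) \, \big\| \, \Prob(Y_{t+1:T} \in \cdot \mid \theta_t, X_{t:T-1}) \big) \big]$ --- which equals that same gap, since $\I(Y_{t+1:T}; \theta_t \mid \data_t, X_{t:T-1}) = 0$ --- and only converts to mutual information at the end via the two-way chain-rule expansion of $\I(Y_{t+1:T}; \data_t, \theta_t \mid X_{t:T-1})$.
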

Please refer to Appendix~\ref{app:streaming} for the proof of Theorem~\ref{thm:info_retain}.
Notice that $\epsilon$ measures the performance loss of the agent; $\I \left(Y_{t+1:T}; \data_t \, \middle | \, X_{t:T-1} \right)$ is the conditional information in $\data_t$ about the joint distribution of $Y_{t+1:T}$; and similarly $\I \left(Y_{t+1:T}; \theta_t \, \middle | \, X_{t:T-1} \right)$ is the conditional information about $Y_{t+1:T}$ retained in $\theta_t$. Also notice that
\[
\I \left(Y_{t+1:T}; \data_t \, \middle | \, X_{t:T-1} \right) \geq 
\I \left(Y_{t+1:T}; \theta_t  \, \middle | \, X_{t:T-1} \right)
\]
always holds due to data processing inequality.
In other words, Theorem~\ref{thm:info_retain} states that to be $\epsilon$-near-optimal, an agent with incremental update must retain in $\theta_t$ all information in $\data_t$ about the joint distribution of $Y_{t+1: T}$, except $\epsilon$ nats\footnote{In this paper, we use natural logarithms in both KL-divergence and mutual information. Notice that $\epsilon$ nats $\approx$ $1.443 \epsilon$ bits.}. 

We conjecture that results similar to Theorem~\ref{thm:info_retain} also hold in broader classes of sequential decision problems, such as multi-armed bandit problems discussed in Section~\ref{sec:mab}, but we leave the formal analysis to future work.

\section{Multi-armed bandits}
\label{sec:mab}

In this section we turn our attention from passive observations, where an agent's predictions do not influence the stream of data $\data_t$, to active decisions, where an agent takes actions that influence future streams of data.
In this context, the ability to predict outcomes over multiple time steps can significantly impact agent performance.
In particular, we will show that the quality of predictions beyond marginals is essential to effectively balance the needs of exploration with exploitation \cite{Thompson1933}.
We begin with a problem formulation around the famous ``multi-armed bandit'', a simple model of sequential decisions that serves to highlight the key issues at play.
Next, we show that, in this context, even agents that make perfect marginal predictions for the outcome of each action may not make good decisions for the problem as a whole.
Finally, we show that agents that make good joint predictions over $\tau = O(K)$ time steps are sufficient to drive efficient exploration in a Bernoulli bandit with $K$ actions.

\subsection{Problem formulation}

Consider a sequential decision problem with $K$ actions. The environment $\environment$ 
prescribes observation probabilities. During each time step $t = 0, 1, 2, \ldots$, the agent selects an action $A_t$ and observes an outcome $Y_{t+1}$ produced by the environment. Conditioned on the environment $\environment$ and action $A_t$, the next observation $Y_{t+1}$ is sampled from $\environment(\cdot |  A_t)$ independently across time. There is a real-valued reward function $r$ that encodes the agent's preferences over outcomes. The objective of the agent is to optimize the expected return over some long horizon $T$, 
\[ 
\E \big[ \textstyle \sum_{t=0}^{T-1} r(Y_{t+1}) \big], \]
where the expectation is taken over random outcomes, algorithmic randomness, and prior over $\environment$.

\subsection{Marginal predictions are insufficient}
\label{sec:mab-marginal}

We will now show that, even if an agent can make perfect predictions for the outcomes of each action, these marginal predictions per action can be insufficient to drive efficient exploration.
The root cause is that, at any time step $t$, the rewards and observations at future time steps $t' > t$ are coupled through the unknown environment $\environment$.
Clearly, we can imagine problem instances where one action reveals the full structure of the environment, but does not itself produce a reward.
In this case, an agent that simply makes good marginal predictions on the reward of each action is clearly insufficient to take advantage of this informative action.
However, perhaps more interestingly, this coupling can occur even when rewards evolve independently per action given the environment.


To make this point clear, we will consider an extension of the coin tossing example from Section \ref{sec:example_coin}.
We consider an Bernoulli bandit with $K$ independent actions where the reward is simply taken to be the $Y_{t+1}$.
For the first $K-1$ actions the agents knows that the outcome is distributed $\mathrm{Ber(0.5)}$, a pure 50\% chance.
However, the final action produces a deterministic outcome of either $1$ or $0$, but it is equally likely to be of either kind.
Clearly, the optimal policy to maximize long term reward is to first select the final action to see if it is optimal and, based on that outcome, choose the arm that maximizes expected reward given full knowledge of $\environment$ for all future steps.
However, any agent that only matches marginal predictions cannot distinguish the final arm from any of the others.
As such, it will be impossible for the agent to do better than random chance over the $K$ arms. On average, it will require $\Theta(K)$ time steps to identify the optimal policy.

Just as in the coin flip example, the difference between the informative and uninformative actions is fully revealed when we look at $\tau^{\textrm{th}}$-order predictive distributions for any $\tau > 1$.
In general, it is only through examining the evolution of future outcomes beyond myopic marginals that allows an agent to optimize the long term returns.
In the next subsection, we will expand on this intuitive argument and relate the ability to predict future outcomes to agent performance.
In fact, we will establish a bound on performance that guarantees that if you can make approximate predictions $O(K)$ steps into the future, then it will be sufficient to drive efficient exploration.

\subsection{Relating joint predictions to regret}
\label{sec:mab-regret}

As explained in the previous subsection, in a sequential decision problem, it is in general desirable to consider uncertainties jointly across actions in order to make effective decisions. Thus, to simplify exposition, we consider predictions for a vector of outcomes, $\Yb \in \Re^K$, with each entry corresponding to the outcome of an action. In this section, we will relate the quality of future predictions about $\Yb$ to agent performance on a Bernoulli bandit with correlated arms. 

Recall that in a $K$-armed Bernoulli bandit, the environment $\environment$ is identified by parameters $p = (p_1,\ldots,p_K)$, where $p_k \in [0, 1]$ is the expected reward of the $k^{\rm th}$ action. We make no assumptions about the prior $\Prob(p \in \cdot)$.  In particular, there could be dependencies, where an observed outcome from trying one action informs our beliefs about other actions. We define the history by time $t$ as  $H_t = \left( A_0, Y_1, \ldots, A_{t-1}, Y_t \right)$.

We use $\tilde{\Yb}_{1:\tau}$ to denote a sequence of $\tau$ vectors sampled from the environment $\environment$. Specifically, these $\tau$ vectors are conditionally independent given $\environment$. Each vector has dimension $K$, and the $k^{\rm th}$ component of each vector is conditionally independently sampled from $\mathrm{Ber}(p_k)$. On the other hand, consider an agent that can also generate a sequence of $K$-dimensional binary vectors at each time $t$. Let $\theta_t$ denote its parameters, and $\hat{\Yb}^t_{1:\tau}$ denote a sequence of $\tau$ binary vectors sampled from it. In this subsection, we establish that if an agent can produce good $\tau^{\rm th}$-order predictive distributions in the sense that
\[ \E \left[ \KL(\Prob(\tilde{\Yb}_{1:\tau} \in \cdot | H_t) \| \Prob(\hat{\Yb}^t_{1:\tau} \in \cdot | \theta_t)) \right] \leq \epsilon \quad \forall t \]
for some small $\epsilon \geq 0$, then based on this agent one can design bandit algorithms that achieve near-optimal performance.

Specifically, we consider an approximate version of Thompson sampling algorithm, which is described in Algorithm~\ref{alg:ats}. This algorithm proceeds as follows: at each time $t$, it first samples $\tau$ binary vectors $\hat{\Yb}^t_{1:\tau}$ based on the agent predictive distribution; then, it samples a vector $\hat{p}^t \in [0, 1]^K$ from the conditional distribution
$\Prob(p \in \cdot | \tilde{\Yb}_{1:\tau} = \hat{\Yb}^t_{1:\tau})$; finally, it chooses an action $A_t$ greedy to $\hat{p}^t$ and update the agent parameters based on new observations. Note that Algorithm~\ref{alg:ats} is general in the sense that it does not depend on the agent's uncertainty representation. Instead, it only requires that the agent can simulate hypothetical observations, sampled from a joint predictive distribution. Also note that Algorithm~\ref{alg:ats} reduces to the standard (exact) Thompson sampling algorithm when $\Prob(\hat{\Yb}_{1:\tau} \in \cdot | \theta_t) = \Prob(\tilde{\Yb}_{1:\tau} \in \cdot | H_t)$ and $\tau \rightarrow \infty$.
We use this algorithm to establish that an agent that performs well based on a particular loss function retains enough information to enable efficient exploration.\footnote{Note that $\min \argmax_k \hat{p}^t_k$ is well defined. Specifically, $ \argmax_k \hat{p}^t_k \subseteq \{1,\ldots,K\}$ is a set.}


\begin{algorithm}
\caption{Approximate Thompson sampling}
\label{alg:ats}
\begin{algorithmic}
\STATE \textbf{Input:} prior over environment parameters $p$ 
\STATE \qquad \quad agent architecture 
\STATE \qquad \quad agent parameter initialization/update procedure
\STATE \textbf{Initialization:} compute parameters $\theta_0$ based on prior
\FOR{$t=0,1,2,\ldots$}
\STATE sample $\hat{\Yb}^t_{1:\tau} \sim \Prob(\hat{\Yb}_{1:\tau} \in \cdot | \theta_t)$
\STATE sample $\hat{p}^t$ from $\Prob(p \in \cdot | \tilde{\Yb}_{1:\tau} = \hat{\Yb}^t_{1:\tau})$
\STATE choose $A_t = \min \argmax_k \hat{p}^t_k$
\STATE compute $\theta_{t+1}$ based on $\theta_t$ and $(A_t, Y_{t+1})$.
\ENDFOR
\end{algorithmic}
\end{algorithm}

As is standard in bandit literature, we measure the performance of Algorithm~\ref{alg:ats} using (Bayes) cumulative regret, which is defined as
\begin{equation}
    \Regret(T) = \textstyle \sum_{t=0}^{T-1}\E \big[ p_{A^*} - r(Y_{t+1}) \big],
\end{equation}
where $A^* = \min \argmax_k p_k$ is one optimal action. Similarly, the expectation is over random outcomes, algorithmic randomness, and prior over $\environment$.
We can establish the following regret bound for Algorithm~\ref{alg:ats}:
\begin{theorem}
\label{thm:regret_bound}
For any integer $\tau \geq 1$ and any $\epsilon \in \Re_+$,
if at each time $t$, the agent with parameters $\theta_t$ can generate samples $\hat{\Yb}^t_{1:\tau}$ such that
$$\E[\KL(\Prob(\tilde{\Yb}_{1:\tau} \in \cdot | H_t)\| \Prob(\hat{\Yb}_{1:\tau}^t \in \cdot | \theta_t))] \leq \epsilon,$$
then under Algorithm~\ref{alg:ats}, we have
\begin{equation}
   \Regret(T) \leq 
   \sqrt{\frac{1}{2} K T \log K} + \left( \frac{K}{\sqrt{2\tau}} + \sqrt{2 \epsilon}\right) T.
\end{equation}
\end{theorem}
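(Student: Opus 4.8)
The plan is to analyze Algorithm~\ref{alg:ats} by comparing it, step by step, against exact Thompson sampling and peeling off the two sources of approximation. Writing $\bar p_a = \E[p_a \mid H_t]$, $\pi_t(a) = \Prob(A_t = a \mid H_t)$ for the action distribution actually induced by the algorithm, $\pi_t'(a)$ for the distribution induced by the same sampling pipeline but with the pseudo-observations $\hat{\Yb}^t_{1:\tau}$ replaced by true posterior-predictive samples $\tilde{\Yb}_{1:\tau} \sim \Prob(\cdot \mid H_t)$, and $\pi_t^*(a) = \Prob(A^* = a \mid H_t)$ for exact Thompson sampling, I would decompose the conditional per-step regret $\Delta_t = \E[p_{A^*} - p_{A_t}\mid H_t]$ as the exact-TS per-step regret $\E[p_{A^*} - p_{A_t^{\mathrm{TS}}}\mid H_t]$ plus two probability-matching mismatch terms $\sum_a(\pi_t^* - \pi_t')(a)\bar p_a$ and $\sum_a(\pi_t' - \pi_t)(a)\bar p_a$. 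The observation that makes this clean is that, conditioned on $H_t$, all of the algorithm's internal randomness is independent of the realized environment $p$, so each mismatch term reduces to a total-variation distance between the corresponding action distributions.

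For the exact-TS contribution I would invoke the information-theoretic machinery of Russo \& Van Roy. Let $g_t = \I(A^*; (A_t, Y_{t+1}) \mid H_t)$ be the per-step information gain about the optimal action, and recall the probability-matching identity $\pi_t^* = \Prob(A^* \in \cdot \mid H_t)$ underlying exact TS. The standard argument bounds the information ratio $\Delta_t^2/g_t \le K/2$, while the chain rule gives $\sum_{t=0}^{T-1}\E[g_t] \le \H(A^*) \le \log K$. A Cauchy--Schwarz step then yields $\sum_t \E[p_{A^*} - p_{A_t^{\mathrm{TS}}}] \le \sqrt{(K/2)\cdot T}\cdot\sqrt{\log K} = \sqrt{\tfrac12 K T \log K}$, exactly the first term of the bound.

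The agent-approximation mismatch $\sum_a (\pi_t' - \pi_t)(a)\bar p_a$ I would control via Pinsker together with the data-processing inequality: since $A_t$ and $A_t'$ arise from pushing the two $\tau^{\mathrm{th}}$-order predictive distributions through the identical randomized map $\hat{\Yb}_{1:\tau}\mapsto \hat p \mapsto \argmax$, the action distributions are at least as close as the predictives, so the mismatch is at most $2\|\Prob(\hat{\Yb}_{1:\tau}\mid\theta_t) - \Prob(\tilde{\Yb}_{1:\tau}\mid H_t)\|_{\mathrm{TV}} \le \sqrt{2\,\KL(\cdot\|\cdot)}$; taking expectations and applying Jensen to $\sqrt{\cdot}$ against the hypothesis $\E[\KL]\le\epsilon$ gives the $\sqrt{2\epsilon}\,T$ term. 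The finite-$\tau$ mismatch $\sum_a (\pi_t^* - \pi_t')(a)\bar p_a$ is different in character: here the pseudo-observations are drawn from the \emph{true} predictive, so there is no KL error, and the gap is purely the finite-sample cost of identifying $p$ from only $\tau$ conditionally-i.i.d. Bernoulli draws per arm. I would couple $\hat p$ with the environment draw that generated the pseudo-observations, bound the induced regret by $2\|\hat p - p\|_\infty \le 2\sum_k |\hat p_k - p_k|$, then use a per-arm second-moment bound of order $1/\tau$ (each coordinate summarizes $\tau$ Bernoulli observations, whose variance is at most $1/(4\tau)$) with Jensen to obtain a per-step penalty of $K/\sqrt{2\tau}$, hence the $\tfrac{K}{\sqrt{2\tau}}T$ term.

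I expect the finite-$\tau$ term to be the main obstacle, for two reasons. First, $\hat p$ is a \emph{sample} from the posterior given $\tau$ imagined observations, not the posterior mean, and those observations are themselves drawn from the current posterior predictive; making the coupling between $\hat p$, the pseudo-observations, and the true parameter precise—so that a clean per-arm $O(1/\tau)$ second-moment bound with the stated constant holds for an arbitrary prior $\Prob(p\in\cdot)$ with correlated arms—is the delicate step. Second, the information-ratio argument for the base term relies on exact probability matching, so the mismatch terms must be separated out cleanly before that machinery is applied; verifying that the conditional independence of the algorithm's randomness from $p$ given $H_t$ legitimately reduces each mismatch to a total-variation distance, without leaking environment-dependent terms, is what makes the three-way split valid. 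Once these are in place, summing the three contributions over $t=0,\dots,T-1$ yields the claimed bound.
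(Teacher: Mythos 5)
Your three-way decomposition is algebraically valid, and your treatment of the agent-approximation term (Pinsker plus data processing, then Jensen against $\E[\KL]\le\epsilon$) matches the paper's handling of that piece. But the bound you claim for your first term is not provable, and this is the central gap. You assert that the exact-TS contribution $\sum_{t}\big(\E[p_{A^*}\mid H_t]-\sum_a \pi_t^*(a)\bar p_a\big)$ obeys the Russo--Van Roy bound $\sqrt{\tfrac12 KT\log K}$. The per-step information-ratio inequality does hold conditionally on any history, but the summation step of that argument is the chain rule $\sum_t \E\big[\I_t(A^*;A_t,\Yb_{A_t})\big]=\I(A^*;H_T)\le \H(A^*)\le\log K$, which applies only when the information gains are measured along the actions and observations that actually enter the history. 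In your decomposition they are measured along hypothetical exact-TS actions $A_t^{\mathrm{TS}}$ that are never played, while $H_t$ is generated by the approximate algorithm, so $\sum_t\E\big[\I_t(A^*;A_t^{\mathrm{TS}},\Yb_{A_t^{\mathrm{TS}}})\big]$ has no $\log K$ budget. Concretely: take $K=2$, $p_2=0.5$ known, $p_1\sim\mathrm{Unif}\{0,1\}$, and an agent whose imagined arm-$1$ outcomes are (essentially) always zero; the algorithm then (essentially) never plays arm $1$, the posterior never sharpens, and your first term is $0.25$ at every step, i.e.\ $\Omega(T)$. Since your claimed bound on this term has no $\epsilon$-dependence, no proof of it can exist. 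Patching by converting $\pi_t^*$-weighted information gains into gains along actual plays requires controlling $\|\pi_t^*-\pi_t'\|_1$, the total-variation distance between the conditional laws of $A^*$ and $\tilde A$; and that is \emph{not} $O(K/\sqrt{\tau})$ --- with two nearly tied arms ($|p_1-p_2|\ll 1/\sqrt{\tau}$, the ordering revealed by $H_t$) it is about $1/2$ for every finite $\tau$. Your finite-$\tau$ hypothesis controls reward differences, never action-distribution TV. This is precisely why the paper does \emph{not} anchor the analysis to $A^*$: it runs the whole information-ratio argument with the learning target $\tilde A=\min\argmax_k\tilde p_k$ (a posterior resample given only the $\tau$ proxy observations), measures information about $\tilde A$ along the \emph{actually played} actions so that $\sum_t\E\big[\I_t(\tilde A;A_t,\Yb_{A_t})\big]\le\H(\tilde A)\le\log K$ telescopes, absorbs the agent error inside the same Cauchy--Schwarz step as the additive TV term, and pays for replacing $A^*$ by $\tilde A$ exactly once, via $\E[p_{A^*}-p_{\tilde A}]\le K/\sqrt{2\tau}$.

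Your second term has a related, subtler problem. It equals $\E\big[\bar p_{A^*}\mid H_t\big]-\E\big[\bar p_{\tilde A}\mid H_t\big]$, a difference of the \emph{posterior-mean} vector evaluated at two differently informed random indices. The coupling you propose controls quantities like $q_{A^*}-q_{\tilde A}$ in which both indices are scored by the same sampled environment $q$, but it leaves a residual $(\bar p-q)_{A^*}-(\bar p-q)_{\tilde A}$ whose conditional expectation is a covariance gap (how much more strongly $A^*$ correlates with $p$ than $\tilde A$ does); this is not an $O(1/\sqrt{\tau})$ per-arm quantity and your sketch never addresses it. The paper's $K/\sqrt{2\tau}$ bound avoids this by comparing $p_{A^*}$ with $p_{\tilde A}$ (values under the true environment, not posterior means) and by exploiting that $p$ and $\tilde p$ are conditionally i.i.d.\ given the proxy $\tilde \Yb_{1:\tau}$, together with the mean-absolute-deviation bound $\E|\tilde p_k-p_k|\le 1/\sqrt{2\tau}$. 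Crucially that exchangeability holds only \emph{unconditionally}: given both $H_t$ and the proxy, $p$ has seen the real data while $\tilde p$ has not, so the symmetry is unavailable inside a decomposition carried out, as yours is, conditionally on $H_t$ throughout. Both difficulties have the same cure, which is the paper's key idea: make $\tilde A$, not $A^*$, the object of the information-theoretic analysis, and compare $A^*$ to $\tilde A$ only once, unconditionally, through their $p$-values.
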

\noindent
Please refer to Section~\ref{sec:proof_sketch} for the proof sketch for Theorem~\ref{thm:regret_bound}, and the detailed proof is provided in Appendix~\ref{app:regret_bound}.
Specifically, the proof for this theorem uses information-ratio analysis similar to previous papers \cite{russo2016information, lu2021reinforcement}, and the notions of \emph{environment proxy} and \emph{learning target} developed in \citet{lu2021reinforcement} (see Section 3.3 of that paper). Specifically, we choose the environment proxy as $\tilde{\Yb}_{1:\tau}$ and the learning target as an action $\tilde{A} = \min \argmax_k \tilde{p}_k$, where $\tilde{p} = (\tilde{p}_1, \ldots, \tilde{p}_K)$ is an independent sample drawn from $\Prob(p \in \cdot | \tilde{\Yb}_{1:\tau})$. 
Note that conditioning on the environment $\environment$,  $\tilde{\Yb}_{1:\tau}$ and $\tilde{A}$ are independent of the history $H_t$.

We now briefly discuss the regret bound in Theorem~\ref{thm:regret_bound}. First, note that if an agent can make good predictions $\tau \geq K/\epsilon$ steps into the future, then this regret bound reduces to
$O ( \sqrt{KT \log(K)} + \sqrt{K \epsilon} T )$, which is sufficient to ensure efficient exploration. Second, notice that this regret bound consists of three terms. The linear regret term $\sqrt{2 \epsilon} T$ is due to the expected KL-divergence loss of the agent. Specifically, if the agent makes a perfect prediction in the sense that 
$
\Prob(\hat{\Yb}_{1:\tau}^t \in \cdot | \theta_t) =
\Prob( \tilde{\Yb}_{1:\tau} \in \cdot | H_t)$ for all $t$, then this linear regret term will reduce to zero. On the other hand, another linear regret term $KT /\sqrt{2 \tau}$ is due to the fact that we choose $\tilde{A}$ as the learning target, which can be a sub-optimal action. It is obvious that as $\tau \rightarrow \infty$, $\tilde{A}$ will converge to $A^*$ and this linear regret term will reduce to zero.
Finally, the sublinear regret term $\sqrt{\frac{1}{2} K T \log K }$ is exactly the regret bound for the exact Thompson sampling algorithm \cite{russo2016information}. This is not surprising since when $\epsilon=0$ (i.e. $
\Prob(\hat{\Yb}_{1:\tau}^t \in \cdot | \theta_t) =
\Prob( \tilde{\Yb}_{1:\tau} \in \cdot | H_t)$) and $\tau \rightarrow \infty$, Algorithm~\ref{alg:ats} reduces to the exact Thompson sampling algorithm.

Finally, it is worth mentioning that we conjecture that we can develop results similar to Theorem~\ref{thm:regret_bound} for more practical algorithms and more general sequential decision problems.
Note that in Algorithm~\ref{alg:ats}, sampling $\hat{p}^t$ from $\Prob(p \in \cdot | \tilde{\Yb}_{1:\tau} = \hat{\Yb}^t_{1:\tau})$ can be computationally expensive. Instead, a computationally more efficient approach is to choose $\hat{p}^t$ as the sample mean of $\tilde{\Yb}_{1:\tau}$, i.e.
$\hat{p}^t = \frac{1}{\tau} \sum_{i=1}^{\tau} \hat{\Yb}^t_i$, where $\hat{\Yb}^t_i$ is the $i^{\rm th}$ vector in $\tilde{\Yb}_{1:\tau}$. We conjecture that one can derive a similar regret bound with this modification, but leave the analysis to future work. Moreover, we believe that similar approximate Thompson sampling algorithms and regret bounds can be developed for more general sequential decision problems, such as 
multi-armed bandits with unbounded noises and episodic reinforcement learning, but leave them to future work.

\subsection{Proof Sketch for Theorem~\ref{thm:regret_bound}}
\label{sec:proof_sketch}

We provide a proof sketch for Theorem~\ref{thm:regret_bound} in this subsection. First, notice that the expected per-step regret at time $t$ is
$\E [p_{A^*} - p_{A_t}]$, which can be decomposed as
\begin{equation}
\label{eq:regret_decomp_1}
\E[p_{A^*} -  p_{A_t}] = \E[p_{A^*} -  p_{\tilde{A}}] + \E[p_{\tilde{A}} - p_{A_t}].
\end{equation}
Recall that action $\tilde{A}$ is the learning target.
We bound the two terms in the righthand side of equation~\eqref{eq:regret_decomp_1} separately. First, based on the fact that $p$ and $\tilde{p}$ are conditionally i.i.d given the environment proxy $\tilde{\Yb}_{1:\tau}$, we can show that
$$
\E[p_{A^*} -  p_{\tilde{A}}] \leq K/ \sqrt{2 \tau} .$$
To bound the second term $\E[p_{\tilde{A}} - p_{A_t}]$, we consider its conditional version $\E_t[p_{\tilde{A}} - p_{A_t}]$, where the subscript $t$ denotes conditioning on the history $H_t$. Using information-ratio analysis, we can prove that
\begin{multline}
    \E_t[p_{\tilde{A}} - p_{A_t}] \leq \sqrt{\frac{K}{2}\I_t(\tilde{A};{A}_t, \Yb_{{A}_{t}})} \nonumber \\
    + \big \|\Prob_t(\tilde{A} \in \cdot) - \Prob_t(A_t \in \cdot)\big \|_1.
\end{multline}
Using Pinsker's inequality, the data processing inequality, and the assumption on the expected KL-divergence in Theorem~\ref{thm:regret_bound}, we can bound that 
\[
\E \big [ \big \|\Prob_t(\tilde{A} \in \cdot) - \Prob_t(A_t \in \cdot)\big \|_1 \big ] \leq \sqrt{2 \epsilon}.
\]
On the other hand, based on Cauchy–Schwarz inequality and the chain rule for mutual information, we have
\[
\sum_{t=0}^{T-1} \E \left [ \sqrt{\I_t(\tilde{A};{A}_t, \Yb_{{A}_{t}})} \right ] \leq \sqrt{T \I(\tilde{A}; H_T)}.
\]
Finally, note that $\I(\tilde{A}; H_T) \leq \H(\tilde{A}) \leq \log K$. Combining the above inequalities, we have proved Theorem~\ref{thm:regret_bound}.

\section{Related work}
\label{sec:related_work}

There is extensive literature focusing on evaluating marginal distributions (e.g. \citet{bishop2006pattern}, \citet{friedman2001elements}, \citet{lecun2015deep}), paired with evaluation on downstream tasks such as bandits \cite{riquelme2018deep}.
There is also a rich literature on developing agents for uncertainty estimation \citep{blundell2015weight, Gal2016Dropout, osband2015bootstrapped, welling2011bayesian}, and in most cases the developed agents are evaluated only based on marginal predictive distributions.

To the best of our knowledge, there is little literature that highlight the importance of joint predictive distributions. One exception is \citet{wang2021beyond}. However,  \citet{wang2021beyond} states that “joint log-likelihood scores [are] determined almost entirely by the marginal log-likelihood scores...in practice, they provide little new information beyond marginal likelihoods,” and proposes an alternative metric for joint distributions based on \emph{posterior predictive correlations} (PPCs).  Our paper, however, establishes that the KL-divergence for joint predictive distributions, which are equivalent to log-likelihood scores, ensure effective decisions, while marginals do not.  In particular, our results in Section~\ref{sec:mab} show that agents that attain small marginal KL-divergence might not retain enough information to enable efficient exploration, while agents that attain small joint KL-divergence do.  Moreover, the PPC metric proposed by \citet{wang2021beyond} is designed for regression problems, while cross-entropy loss can be applied more broadly, for example, to classification as well as regression problems. There is also a recent empirical work that compares agents based on their joint predictive distributions \citep{osband2021evaluating}.

There is also extensive literature on combinatorial decision-making \citep{papadimitriou1998combinatorial}, sequential prediction \citep{shalev2011online}, multi-armed bandits \citep{auer2002finite, lattimore2020bandit}, and Thompson sampling \citep{Thompson1933,chapelle2011empirical, russo2017tutorial}. There are also some recent work on approximate versions of Thompson sampling algorithms \citep{lu2017ensemble, phan2019thompson, zhang2019scalable, yu2020graphical}. In this paper, we propose and analyze a novel variant of Thompson sampling algorithm for multi-armed bandits. Our analysis uses an information-theoretic approach, similar to papers in this research line \citep{russo2016information, russo2018learning, lu2021reinforcement}. In addition, our analysis makes use of the notions of environment proxy and learning target developed in \citet{lu2021reinforcement}.


\section{Conclusion}
\label{sec:conclusion}

This paper aims to elucidate the importance of the joint predictive distributions for a broad class of decision problems, including the combinatorial and sequential decision problems. Specifically, we have shown that in a simple combinatorial decision problem (Section~\ref{sec:combinatorial}), a sequential prediction problem (Section~\ref{sec:sequential_prediction}), and a multi-armed bandit problem (Section~\ref{sec:mab}), accurate marginal predictions are insufficient to drive effective decisions. Instead, accurate joint predictive distributions are necessary for good performance. We also show that, in the multi-armed bandit problem, accurate joint predictive distributions are sufficient to enable near-optimal performance of a novel variant of Thompson sampling by establishing a new kind of regret bound via information-theoretic analysis (Theorem~\ref{thm:regret_bound}).  While we evaluate the accuracy of marginal and joint predictive distributions based on KL-divergence (Section~\ref{sec:evaluation_metric}), or equivalently, cross-entropy loss, our insights on marginal versus joint predictions should extend to other evaluation metrics.

\bibliographystyle{icml2022}
\bibliography{reference}

\begin{thebibliography}{28}
\providecommand{\natexlab}[1]{#1}
\providecommand{\url}[1]{\texttt{#1}}
\expandafter\ifx\csname urlstyle\endcsname\relax
  \providecommand{\doi}[1]{doi: #1}\else
  \providecommand{\doi}{doi: \begingroup \urlstyle{rm}\Url}\fi

\bibitem[Auer et~al.(2002)Auer, Cesa-Bianchi, and Fischer]{auer2002finite}
Auer, P., Cesa-Bianchi, N., and Fischer, P.
\newblock Finite-time analysis of the multiarmed bandit problem.
\newblock \emph{Machine learning}, 47\penalty0 (2):\penalty0 235--256, 2002.

\bibitem[Bishop(2006)]{bishop2006pattern}
Bishop, C.~M.
\newblock Pattern recognition.
\newblock \emph{Machine learning}, 128\penalty0 (9), 2006.

\bibitem[Blundell et~al.(2015)Blundell, Cornebise, Kavukcuoglu, and
  Wierstra]{blundell2015weight}
Blundell, C., Cornebise, J., Kavukcuoglu, K., and Wierstra, D.
\newblock Weight uncertainty in neural network.
\newblock In \emph{International Conference on Machine Learning}, pp.\
  1613--1622. PMLR, 2015.

\bibitem[Chapelle \& Li(2011)Chapelle and Li]{chapelle2011empirical}
Chapelle, O. and Li, L.
\newblock An empirical evaluation of thompson sampling.
\newblock \emph{Advances in neural information processing systems},
  24:\penalty0 2249--2257, 2011.

\bibitem[Cover(1999)]{cover1999elements}
Cover, T.~M.
\newblock \emph{Elements of information theory}.
\newblock John Wiley \& Sons, 1999.

\bibitem[Friedman et~al.(2001)Friedman, Hastie, Tibshirani,
  et~al.]{friedman2001elements}
Friedman, J., Hastie, T., Tibshirani, R., et~al.
\newblock \emph{The elements of statistical learning}, volume~1.
\newblock Springer series in statistics New York, 2001.

\bibitem[Gal \& Ghahramani(2016)Gal and Ghahramani]{Gal2016Dropout}
Gal, Y. and Ghahramani, Z.
\newblock Dropout as a {B}ayesian approximation: Representing model uncertainty
  in deep learning.
\newblock In \emph{International Conference on Machine Learning}, 2016.

\bibitem[Goodfellow et~al.(2016)Goodfellow, Bengio, and
  Courville]{goodfellow2016deep}
Goodfellow, I., Bengio, Y., and Courville, A.
\newblock \emph{Deep learning}.
\newblock MIT press, 2016.

\bibitem[Kingma \& Ba(2015)Kingma and Ba]{kingma2014adam}
Kingma, D. and Ba, J.
\newblock {Adam: A Method for Stochastic Optimization}.
\newblock \emph{Proceedings of the International Conference on Learning
  Representations}, 2015.

\bibitem[Lattimore \& Szepesv{\'a}ri(2020)Lattimore and
  Szepesv{\'a}ri]{lattimore2020bandit}
Lattimore, T. and Szepesv{\'a}ri, C.
\newblock \emph{Bandit algorithms}.
\newblock Cambridge University Press, 2020.

\bibitem[LeCun et~al.(2015)LeCun, Bengio, and Hinton]{lecun2015deep}
LeCun, Y., Bengio, Y., and Hinton, G.
\newblock Deep learning.
\newblock \emph{Nature}, 521\penalty0 (7553):\penalty0 436, 2015.

\bibitem[Lu \& Van~Roy(2017)Lu and Van~Roy]{lu2017ensemble}
Lu, X. and Van~Roy, B.
\newblock Ensemble sampling.
\newblock In \emph{Advances in Neural Information Processing Systems}, pp.\
  3260--3268, 2017.

\bibitem[Lu et~al.(2021)Lu, Van~Roy, Dwaracherla, Ibrahimi, Osband, and
  Wen]{lu2021reinforcement}
Lu, X., Van~Roy, B., Dwaracherla, V., Ibrahimi, M., Osband, I., and Wen, Z.
\newblock Reinforcement learning, bit by bit.
\newblock \emph{arXiv preprint arXiv:2103.04047}, 2021.

\bibitem[Mattner(2003)]{mattner2003mean}
Mattner, L.
\newblock Mean absolute deviations of sample means and minimally concentrated
  binomials.
\newblock \emph{Annals of probability}, pp.\  914--925, 2003.

\bibitem[Osband \& Van~Roy(2015)Osband and Van~Roy]{osband2015bootstrapped}
Osband, I. and Van~Roy, B.
\newblock Bootstrapped {T}hompson sampling and deep exploration.
\newblock \emph{arXiv preprint arXiv:1507.00300}, 2015.

\bibitem[Osband et~al.(2021)Osband, Wen, Asghari, Dwaracherla, Hao, Ibrahimi,
  Lawson, Lu, O'Donoghue, and Van~Roy]{osband2021evaluating}
Osband, I., Wen, Z., Asghari, S.~M., Dwaracherla, V., Hao, B., Ibrahimi, M.,
  Lawson, D., Lu, X., O'Donoghue, B., and Van~Roy, B.
\newblock Evaluating predictive distributions: Does bayesian deep learning
  work?
\newblock \emph{arXiv preprint arXiv:2110.04629}, 2021.

\bibitem[Papadimitriou \& Steiglitz(1998)Papadimitriou and
  Steiglitz]{papadimitriou1998combinatorial}
Papadimitriou, C.~H. and Steiglitz, K.
\newblock \emph{Combinatorial optimization: algorithms and complexity}.
\newblock Courier Corporation, 1998.

\bibitem[Phan et~al.(2019)Phan, Abbasi-Yadkori, and Domke]{phan2019thompson}
Phan, M., Abbasi-Yadkori, Y., and Domke, J.
\newblock Thompson sampling with approximate inference.
\newblock \emph{arXiv preprint arXiv:1908.04970}, 2019.

\bibitem[Riquelme et~al.(2018)Riquelme, Tucker, and Snoek]{riquelme2018deep}
Riquelme, C., Tucker, G., and Snoek, J.
\newblock Deep {B}ayesian bandits showdown: An empirical comparison of
  {B}ayesian deep networks for thompson sampling.
\newblock \emph{arXiv preprint arXiv:1802.09127}, 2018.

\bibitem[Russo \& Van~Roy(2016)Russo and Van~Roy]{russo2016information}
Russo, D. and Van~Roy, B.
\newblock An information-theoretic analysis of {T}hompson sampling.
\newblock \emph{The Journal of Machine Learning Research}, 17\penalty0
  (1):\penalty0 2442--2471, 2016.

\bibitem[Russo \& Van~Roy(2018)Russo and Van~Roy]{russo2018learning}
Russo, D. and Van~Roy, B.
\newblock Learning to optimize via information-directed sampling.
\newblock \emph{Operations Research}, 66\penalty0 (1):\penalty0 230--252, 2018.

\bibitem[Russo et~al.(2018)Russo, Van~Roy, Kazerouni, Osband, and
  Wen]{russo2017tutorial}
Russo, D.~J., Van~Roy, B., Kazerouni, A., Osband, I., and Wen, Z.
\newblock A tutorial on {T}hompson sampling.
\newblock \emph{Found. Trends Mach. Learn.}, 11\penalty0 (1):\penalty0 1–96,
  July 2018.
\newblock ISSN 1935-8237.

\bibitem[Shalev-Shwartz et~al.(2011)]{shalev2011online}
Shalev-Shwartz, S. et~al.
\newblock Online learning and online convex optimization.
\newblock \emph{Foundations and trends in Machine Learning}, 4\penalty0
  (2):\penalty0 107--194, 2011.

\bibitem[Thompson(1933)]{Thompson1933}
Thompson, W.~R.
\newblock On the likelihood that one unknown probability exceeds another in
  view of the evidence of two samples.
\newblock \emph{Biometrika}, 25\penalty0 (3/4):\penalty0 285--294, 1933.

\bibitem[Wang et~al.(2021)Wang, Sun, and Grosse]{wang2021beyond}
Wang, C., Sun, S., and Grosse, R.
\newblock Beyond marginal uncertainty: How accurately can {B}ayesian regression
  models estimate posterior predictive correlations?
\newblock In \emph{International Conference on Artificial Intelligence and
  Statistics}, pp.\  2476--2484. PMLR, 2021.

\bibitem[Welling \& Teh(2011)Welling and Teh]{welling2011bayesian}
Welling, M. and Teh, Y.~W.
\newblock Bayesian learning via stochastic gradient {L}angevin dynamics.
\newblock In \emph{Proceedings of the 28th international conference on machine
  learning (ICML-11)}, pp.\  681--688. Citeseer, 2011.

\bibitem[Yu et~al.(2020)Yu, Kveton, Wen, Zhang, and Mengshoel]{yu2020graphical}
Yu, T., Kveton, B., Wen, Z., Zhang, R., and Mengshoel, O.~J.
\newblock Graphical models meet bandits: A variational thompson sampling
  approach.
\newblock In \emph{International Conference on Machine Learning}, pp.\
  10902--10912. PMLR, 2020.

\bibitem[Zhang et~al.(2019)Zhang, Wen, Chen, and Carin]{zhang2019scalable}
Zhang, R., Wen, Z., Chen, C., and Carin, L.
\newblock Scalable thompson sampling via optimal transport.
\newblock \emph{arXiv preprint arXiv:1902.07239}, 2019.

\end{thebibliography}

\newpage 
\appendix
\onecolumn
\vspace*{0.1cm}
\begin{center}
    {\LARGE \textbf{Appendices}}
\end{center}
\vspace{0.4cm}

\section{Proof for Proposition~\ref{prop:universality}}
\label{app:proof_for_universality}

\begin{proof}
From Pinsker's inequality, we have
\[
\delta \left( \overline{P}_{T+1:T+\tau} , \hat{P}_{T+1:T+\tau} \right) \leq \sqrt{\frac{1}{2} \KL \big ( \overline{P}_{T+1:T+\tau} \big \| 
\hat{P}_{T+1:T+\tau}
\big)},
\]
where $\delta$ is the total variation distance. Without loss of generality, we assume that the supports of $\overline{P}_{T+1:T+\tau}$ and 
$\hat{P}_{T+1:T+\tau}$ are countable. Then we have
$\delta \left( \overline{P}_{T+1:T+\tau} , \hat{P}_{T+1:T+\tau} \right) = \frac{1}{2} \left \| \overline{P}_{T+1:T+\tau} - \hat{P}_{T+1:T+\tau} \right \|_1$. Consequently, we have
\[
\left \| \overline{P}_{T+1:T+\tau} - \hat{P}_{T+1:T+\tau} \right \|_1
 \leq \sqrt{2 \KL \big ( \overline{P}_{T+1:T+\tau} \big \| 
\hat{P}_{T+1:T+\tau}
\big)}.
\]
Recall that $\hat{a} \in \mathcal{A}$ maximizes 
$$\sum_{y_{T+1:T+\tau}} \hat{P}_{T+1:T+\tau}(y_{T+1:T+\tau}) r(a, y_{T+1:T+\tau})
$$
over all $a \in \mathcal{A}$.
Similarly, we assume $a^* \in \mathcal{A}$ maximizes
$$\sum_{y_{T+1:T+\tau}} \overline{P}_{T+1:T+\tau}(y_{T+1:T+\tau}) r(a, y_{T+1:T+\tau}),
$$
over all $a \in \mathcal{A}$. Then we have
\begin{align}
   & \,\left | \sum_{y_{T+1:T+\tau}} \overline{P}_{T+1:T+\tau}(y_{T+1:T+\tau}) r(a^*, y_{T+1:T+\tau}) -
   \sum_{y_{T+1:T+\tau}} \hat{P}_{T+1:T+\tau}(y_{T+1:T+\tau}) r(a^*, y_{T+1:T+\tau}) \right| \nonumber \\
   =& \, 
   \left | \sum_{y_{T+1:T+\tau}} \left( \overline{P}_{T+1:T+\tau}(y_{T+1:T+\tau}) - \hat{P}_{T+1:T+\tau}(y_{T+1:T+\tau}) \right)  r(a^*, y_{T+1:T+\tau}) \right| \nonumber \\
   \leq & \, 
   \sum_{y_{T+1:T+\tau}} \left | \overline{P}_{T+1:T+\tau}(y_{T+1:T+\tau}) - \hat{P}_{T+1:T+\tau}(y_{T+1:T+\tau}) \right |   r(a^*, y_{T+1:T+\tau}) \nonumber \\
   \stackrel{(a)}{\leq} & \, 
   \sum_{y_{T+1:T+\tau}} \left | \overline{P}_{T+1:T+\tau}(y_{T+1:T+\tau}) - \hat{P}_{T+1:T+\tau}(y_{T+1:T+\tau}) \right |   \nonumber \\
   =& \, \left \| \overline{P}_{T+1:T+\tau} - \hat{P}_{T+1:T+\tau} \right \|_1
 \leq \sqrt{2 \KL \big ( \overline{P}_{T+1:T+\tau} \big \| 
\hat{P}_{T+1:T+\tau}
\big)}, \nonumber
\end{align}
where (a) follows from $r(a^*, y_{T+1:T+\tau}) \in [0, 1]$. Consequently, we have
\begin{align}
    \sum_{y_{T+1:T+\tau}} \overline{P}_{T+1:T+\tau}(y_{T+1:T+\tau}) r(a^*, y_{T+1:T+\tau}) \leq & \, \sum_{y_{T+1:T+\tau}} \hat{P}_{T+1:T+\tau}(y_{T+1:T+\tau}) r(a^*, y_{T+1:T+\tau}) \nonumber \\
    +& \, \sqrt{2 \KL \big ( \overline{P}_{T+1:T+\tau} \big \| 
\hat{P}_{T+1:T+\tau}
\big)} \nonumber \\
\leq & \, \sum_{y_{T+1:T+\tau}} \hat{P}_{T+1:T+\tau}(y_{T+1:T+\tau}) r(\hat{a}, y_{T+1:T+\tau}) \nonumber \\
    +& \, \sqrt{2 \KL \big ( \overline{P}_{T+1:T+\tau} \big \| 
\hat{P}_{T+1:T+\tau}
\big)}, \nonumber
\end{align}
where the second inequality follows from the fact that $\hat{a}$ is optimal under $\hat{P}_{T+1:T+\tau}$. Taking the expectations, we have
\[
\E \left[r(\hat{a}, y_{T+1:T+\tau}) \right] \geq \E \left[r(a^*, y_{T+1:T+\tau}) \right] - \E \left[ \sqrt{2 \KL \big ( \overline{P}_{T+1:T+\tau} \big \| 
\hat{P}_{T+1:T+\tau}
\big)}
\right].
\]
From Jensen's inequality, we have
\[
\E \left[r(a^*, y_{T+1:T+\tau}) \right] \geq \max_{a \in \mathcal{A}} \E \left[r(a, y_{T+1:T+\tau}) \right]
\]
and
\[
\E \left[ \sqrt{2 \KL \big ( \overline{P}_{T+1:T+\tau} \big \| 
\hat{P}_{T+1:T+\tau}
\big)}
\right] \leq
 \sqrt{2 \E \left[ \KL \big ( \overline{P}_{T+1:T+\tau} \big \| 
\hat{P}_{T+1:T+\tau}
\big) \right]} = \sqrt{2 \KL^\tau}.
\]
Consequently, we have
\[
\E \left[r(\hat{a}, y_{T+1:T+\tau}) \right] \geq \max_{a \in \mathcal{A}} \E \left[r(a, y_{T+1:T+\tau}) \right] - \sqrt{2 \KL^\tau}.
\]
This concludes the proof.
\end{proof}

\section{Proof for Theorem~\ref{thm:info_retain}} \label{app:streaming}

\subsection{Proof for Theorem~\ref{thm:info_retain}}

\begin{proof}
Notice that
\begin{align*}
&~ \E\left[\KL \big(\Prob \left(Y_{t+1:T} \in \cdot \middle | \data_t, X_{t:T-1}\right) \big \|
    \Prob(Y_{t+1:T} \in \cdot | \theta_t, X_{t:T-1})
    \big)\right] \\
\overset{(a)}{=} &~ \E\left[\KL \big(\Prob \left(Y_{t+1} \in \cdot \middle | \data_t, X_t\right) \big \|
    \Prob(Y_{t+1} \in \cdot | \theta_t, X_t)
    \big)\right] \\
&~ \quad + \E\left[\KL \big(\Prob \left(Y_{t+2:T} \in \cdot \middle | \data_{t+1}, X_{t+1:T-1} \right) \big \|
    \Prob(Y_{t+2:T} \in \cdot | \theta_t, X_t, Y_{t+1}, X_{t+1:T-1})
    \big)\right] \\
\overset{(b)}{\leq} &~ \E\left[\KL \big(\Prob \left(Y_{t+1} \in \cdot \middle | \data_t, X_t\right) \big \|
    \Prob(Y_{t+1} \in \cdot | \theta_t, X_t)
    \big)\right] \\
&~ \quad + \E\left[\KL \big(\Prob \left(Y_{t+2:T} \in \cdot \middle | \data_{t+1}, X_{t+1:T-1} \right) \big \|
    \Prob(Y_{t+2:T} \in \cdot | \theta_{t+1}, X_{t+1:T-1})
    \big)\right] \\
\leq &~ \dots \\
\leq &~ \E\left[\sum_{t'=t}^{T-1} \KL \big(\Prob \left(Y_{t'+1} \in \cdot \middle | \data_{t'}, X_{t'} \right) \big \|
    \Prob(Y_{t'+1} \in \cdot | \theta_{t'}, X_{t'} )
    \big)\right] \\
\overset{(c)}{\leq} &~ \E\left[\sum_{t'=t}^{T-1}
    \KL \big(\Prob \left(Y_{t'+1} \in \cdot \middle | \data_{t'}, X_{t'} \right) \big \|
    \Prob(\hat{Y}_{t'+1} \in \cdot | \theta_{t'}, X_{t'})
    \big)\right] \\
\stackrel{(d)}{=} &~
\E\left[\sum_{t'=t}^{T-1}
    \KL \big( \bar{P}_{t'+1} \big \|
    \hat{P}_{t'+1}
    \big)\right] \leq \epsilon, 
\end{align*}
where (a) follows from the chain rule of KL divergence, (b) and (c) follow from the fact that the conditional predictive distribution minimizes the KL divergence to the posterior (see Lemma~\ref{lemma:conditional-predictive}), and (d) follows from the definition. On the other hand, by definition, we have
\[
\I \left( Y_{t+1:T}; \data_t \, \middle | \, \theta_t, X_{t:T-1}  \right ) = \E\left[\KL \big(\Prob \left(Y_{t+1:T} \in \cdot \middle | \data_t, X_{t:T-1}\right) \big \|
    \Prob(Y_{t+1:T} \in \cdot | \theta_t, X_{t:T-1})
    \big)\right] \leq \epsilon. 
\]
Moreover, from the chain rule of mutual information, we have
\begin{align}
    \I \left( Y_{t+1:T}; \data_t , \theta_t \, \middle | \,  X_{t:T-1}  \right ) = & \, \I \left( Y_{t+1:T}; \theta_t \, \middle | \,  X_{t:T-1}  \right ) + \underbrace{\I \left( Y_{t+1:T}; \data_t   \, \middle | \, \theta_t,  X_{t:T-1}  \right )}_{\leq \epsilon} \nonumber \\
     = & \, \I \left( Y_{t+1:T}; \data_t \, \middle | \,  X_{t:T-1}  \right ) + \underbrace{\I \left( Y_{t+1:T}; \theta_t   \, \middle | \, \data_t,  X_{t:T-1}  \right )}_{=0}, \nonumber 
\end{align}
where $\I \left( Y_{t+1:T}; \theta_t   \, \middle | \, \data_t,  X_{t:T-1}  \right ) = 0$ since $ Y_{t+1:T}$ and $\theta_t$ are conditionally independent given $\data_t$ and $X_{t:T-1}$. Hence we have
\[
\I \left( Y_{t+1:T}; \data_t \, \middle | \,  X_{t:T-1}  \right ) \geq 
\I \left( Y_{t+1:T}; \theta_t \, \middle | \,  X_{t:T-1}  \right ) \geq \I \left( Y_{t+1:T}; \data_t \, \middle | \,  X_{t:T-1}  \right ) - \epsilon.
\]
This concludes the proof.
\end{proof}

\subsection{Lemma~\ref{lemma:conditional-predictive}}

\begin{lemma} \label{lemma:conditional-predictive}
Let the agent parameter $\theta_t \perp \environment | \data_t$. Then, the conditional predictive distribution $\Prob(Y_{t+1} \in \cdot | \theta_t, X_t)$ minimizes the expected KL divergence
\[ \E\left[ \KL\left( \Prob(Y_{t+1}\in \cdot | \data_t, X_t) \big\| \Prob(\hat{Y}_{t+1} \in \cdot | \theta_t, X_t) \right) \right] \]
over all possible predictive distribution $\Prob(\hat{Y}_{t+1}\in \cdot | \theta_t, X_t)$.
\end{lemma}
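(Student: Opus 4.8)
The plan is to reduce this minimization to a pointwise problem in $(\theta_t, X_t)$ and close it with Gibbs' inequality, after first extracting a conditional independence from the hypothesis. Writing $Q(\cdot\mid\theta_t,X_t)$ for a generic candidate predictive distribution, I would first expand $\KL(\Prob(Y_{t+1}\in\cdot\mid\data_t,X_t)\,\|\,Q)$ and observe that its entropy term $\sum_y \Prob(Y_{t+1}=y\mid\data_t,X_t)\log\Prob(Y_{t+1}=y\mid\data_t,X_t)$ does not depend on $Q$. Hence minimizing the expected KL over $Q$ is equivalent to maximizing the expected log-likelihood $\E[\log Q(Y_{t+1}\mid\theta_t,X_t)]$.

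The key step is to establish the conditional independence $Y_{t+1}\perp\theta_t\mid(\data_t,X_t)$. This follows from the hypothesis $\theta_t\perp\environment\mid\data_t$ together with the modeling assumption that $Y_{t+1}\sim\environment(\cdot\mid X_t)$, so that $Y_{t+1}$ depends on the past only through $(\environment,X_t)$. Concretely, I would factor $\Prob(Y_{t+1},\theta_t\mid\data_t,X_t)$ by integrating out $\environment$, replace $\Prob(Y_{t+1}\mid\environment,\theta_t,\data_t,X_t)$ with $\Prob(Y_{t+1}\mid\environment,X_t)$ and $\Prob(\theta_t\mid\environment,\data_t)$ with $\Prob(\theta_t\mid\data_t)$ using the two independencies, and recognize that the resulting integral factorizes into $\Prob(\theta_t\mid\data_t,X_t)\,\Prob(Y_{t+1}\mid\data_t,X_t)$.

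With this independence in hand, I would rewrite $\E[\sum_y\Prob(Y_{t+1}=y\mid\data_t,X_t)\log Q(y\mid\theta_t,X_t)]$ as the joint expectation $\E[\log Q(Y_{t+1}\mid\theta_t,X_t)]$ and then apply the tower property conditioning on $(\theta_t,X_t)$, obtaining $\E_{\theta_t,X_t}[\sum_y\Prob(Y_{t+1}=y\mid\theta_t,X_t)\log Q(y\mid\theta_t,X_t)]$. For each fixed $(\theta_t,X_t)$, Gibbs' inequality shows the inner sum is maximized by taking $Q(\cdot\mid\theta_t,X_t)=\Prob(Y_{t+1}\in\cdot\mid\theta_t,X_t)$, which is exactly the conditional predictive distribution, proving the claim. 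A slicker alternative I would also record is the Pythagorean identity
\[
\E[\KL(\Prob(Y_{t+1}\in\cdot\mid\data_t,X_t)\,\|\,Q)]=\E[\KL(\Prob(Y_{t+1}\in\cdot\mid\data_t,X_t)\,\|\,P^\theta)]+\E_{\theta_t,X_t}[\KL(P^\theta\,\|\,Q)],
\]
with $P^\theta=\Prob(Y_{t+1}\in\cdot\mid\theta_t,X_t)$, obtained by taking the conditional expectation over $\data_t$ given $(\theta_t,X_t)$ and using $\E[\Prob(Y_{t+1}=y\mid\data_t,X_t)\mid\theta_t,X_t]=P^\theta(y)$; nonnegativity of the last term yields optimality and even quantifies the gap.

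The main obstacle is precisely the conditional independence $Y_{t+1}\perp\theta_t\mid(\data_t,X_t)$; everything downstream is routine (Gibbs' inequality and the tower property). I would be careful that $\theta_t$ may carry extra algorithmic randomness yet remains conditionally independent of $\environment$ given $\data_t$, and that each $X_t$ is exogenous, so that the factorization above is valid.
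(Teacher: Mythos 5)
Your proposal is correct and takes essentially the same approach as the paper: the paper's proof is precisely your ``slicker alternative,'' namely the Pythagorean decomposition of the expected KL into the expected KL against $\Prob(Y_{t+1}\in\cdot\mid\theta_t,X_t)$ plus the expected KL of that predictive against the candidate, where the crucial equality is the same tower-property step $\E[\Prob(Y_{t+1}=y\mid\data_t,X_t)\mid\theta_t,X_t]=\Prob(Y_{t+1}=y\mid\theta_t,X_t)$ on which your Gibbs'-inequality route also rests. The only difference is presentational: you spell out the conditional independence $Y_{t+1}\perp\theta_t\mid(\data_t,X_t)$ (derived from $\theta_t\perp\environment\mid\data_t$) that justifies this step, whereas the paper uses it implicitly.
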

\begin{proof}
We have
\begin{align*}
& \E\left[ \KL\left( \Prob(Y_{t+1} \in \cdot | \data_t, X_t) \big\| \Prob(\hat{Y}_{t+1} \in \cdot | \theta_t, X_t) \right) \right] \\
&= \E\left[ \sum_y \Prob(Y_{t+1}=y | \data_t, X_t) \log \left( \frac{\Prob(Y_{t+1}=y | \data_t, X_t)}{\Prob(Y_{t+1}=y | \theta_t, X_t)} \cdot \frac{\Prob(Y_{t+1}=y | \theta_t, X_t)}{\Prob(\hat{Y}_{t+1}=y | \theta_t, X_t)} \right) \right] \\
&= \E\left[ \KL\left( \Prob(Y_{t+1} \in \cdot | \data_t, X_t) \big\| \Prob(Y_{t+1} \in \cdot | \theta_t, X_t) \right) \right] \\
&\qquad + \E\left[ \sum_y \Prob(Y_{t+1}=y | \data_t, X_t) \log  \frac{\Prob(Y_{t+1}=y | \theta_t, X_t)}{\Prob(\hat{Y}_{t+1}=y | \theta_t, X_t)} \right] \\
&= \E\left[ \KL\left( \Prob(Y_{t+1} \in \cdot | \data_t, X_t) \big\| \Prob(Y_{t+1} \in \cdot | \theta_t, X_t) \right) \right] \\
&\qquad + \E\left[ \sum_y \Prob(Y_{t+1}=y | \theta_t, X_t) \log  \frac{\Prob(Y_{t+1}=y | \theta_t, X_t)}{\Prob(\hat{Y}_{t+1}=y | \theta_t, X_t)} \right] \\
&= \E\left[ \KL\left( \Prob(Y_{t+1} \in \cdot | \data_t, X_t) \big\| \Prob(Y_{t+1} \in \cdot | \theta_t, X_t) \right) \right] \\
&\qquad + \E\left[ \KL\left( \Prob(Y_{t+1} \in \cdot | \theta_t, X_t) \big\| \Prob(\hat{Y}_{t+1} \in \cdot | \theta_t, X_t) \right) \right] \\
&\geq \E\left[ \KL\left( \Prob(Y_{t+1} \in \cdot | \data_t, X_t) \big\| \Prob(Y_{t+1} \in \cdot | \theta_t, X_t) \right) \right].
\end{align*}
\end{proof}
Similarly, we can prove that 
\begin{multline}
   \E\left[\KL \big(\Prob \left(Y_{t+2:T} \in \cdot \middle | \data_{t+1}, X_{t+1:T-1} \right) \big \|
    \Prob(Y_{t+2:T} \in \cdot | \theta_t, X_t, Y_{t+1}, X_{t+1:T-1})
    \big)\right]
   \\
    \leq \E\left[\KL \big(\Prob \left(Y_{t+2:T} \in \cdot \middle | \data_{t+1}, X_{t+1:T-1} \right) \big \|
    \Prob(Y_{t+2:T} \in \cdot | \theta_{t+1}, X_{t+1:T-1})
    \big)\right]. \nonumber
\end{multline}
To see it, notice that $\theta_{t+1} \sim \Prob(\theta_{t+1} \in \cdot | \theta_t, X_t, Y_{t+1}, t)$, and hence
$\Prob(Y_{t+2:T} \in \cdot | \theta_{t+1}, X_{t+1:T-1})$ is a particular choice of 
$\Prob(\hat{Y}_{t+2:T} \in \cdot | \theta_t, X_t, Y_{t+1}, X_{t+1:T-1})$. Consequently, using an analysis similar to Lemma~\ref{lemma:conditional-predictive}, we can prove the above inequality.

\section{Proof for Theorem~\ref{thm:regret_bound}}
\label{app:regret_bound}

\begin{proof}
We will now analyze the approximate TS algorithm.  For this purpose, we continue to use a proxy $\proxy = \tilde{\Yb}_{1:\tau}$ and target $\tilde{A} = \min \argmax_k \tilde{p}_k$.
Further, we define $\tilde{\Yb}^t_{1:\tau} \sim \Prob(\tilde{\Yb}_{1:\tau} \in \cdot | H_t)$, $\tilde{p}^t \sim \Prob(p \in \cdot | \tilde{\Yb}_{1:\tau} = \tilde{\Yb}^t_{1:\tau})$, and $\tilde{A}_t = \min \argmax_k \tilde{p}^t_k$.

Notice that
\begin{equation}
\label{eqn:regret decomposition}
\E[p_{A^*} -  p_{A_t}] = \E[p_{A^*} -  p_{\tilde{A}}] + \E[p_{\tilde{A}} - p_{A_t}],
\end{equation} 
and we first show that
$
\E[p_{A^*} -  p_{\tilde{A}}] \leq \frac{K}{2\tau}$. Notice that
\begin{align*}
\E[p_{A^*} - p_{\tilde{A}}]
=& \E[\E[p_{A^*} - p_{\tilde{A}} | \proxy]] 
= \E[\E[\tilde{p}_{\tilde{A}} - p_{\tilde{A}} | \proxy]] \\
\leq& \E[\E[\max_k (\tilde{p}_k - p_k) | \proxy]] 
\leq \sum_{k=1}^K \E[|\tilde{p}_k - p_k|] 
\stackrel{(a)}{\leq} \frac{K}{\sqrt{2\tau}}.
\end{align*}
Please refer to \cite{mattner2003mean} for the proof of inequality (a).

Now we are going to bound the second term $\E[p_{\tilde{A}} - p_{A_t}]$ on the RHS of Equation \eqref{eqn:regret decomposition} through bounding its conditional version $\E_t[p_{\tilde{A}} - p_{A_t}]$ in terms of the information gain $\I_t(\tilde{A};{A}_t, \Yb_{{A}_{t}})$, where the subscript $t$ in $\E_t$ and $\I_t$ denotes that the expectation and mutual information condition on the history $H_t$. In other words, $\E_t[p_{\tilde{A}} - p_{A_t}] = \E[p_{\tilde{A}} - p_{A_t} | H_t] $ and 
$\I_t(\tilde{A};{A}_t, \Yb_{{A}_{t}}) = \I(\tilde{A};{A}_t, \Yb_{{A}_{t}} | H_t = H_t) $. We first note that
\begin{align*}
\I_t(\tilde{A};{A}_t, \Yb_{{A}_{t}})
&= \I_t(\tilde{A}; {A}_{t}) + \I_t(\tilde{A}; \Yb_{{A}_{t}}|A_t) \\
&= \I_t(\tilde{A}; \Yb_{{A}_{t}}|A_t) \\
&= \sum_a\Prob_t(A_t = a)\I_t(\tilde{A}; \Yb_{{A}_{t}}|A_t = a) \\
&= \sum_a\Prob_t(A_t = a)\I_t(\tilde{A}; \Yb_{a}) \\
&= \sum_a\Prob_t(A_t = a)\left(\sum_{\tilde{a}}\Prob_t(\tilde{A} = \tilde{a})\KL\left(\Prob_t(\Yb_{a} \mid \tilde{A} = \tilde{a}) \| \Prob_t(\Yb_{a})\right)  \right)\\
&= \sum_{a,\tilde{a}}\Prob_t(A_t = a)\Prob_t(\tilde{A} = \tilde{a})\KL\left(\Prob_t(\Yb_{a} \mid \tilde{A} = \tilde{a}) \| \Prob_t(\Yb_{a})\right)  
\end{align*}
where the first inequality uses the chain rule of mutual information; the second inequality follows from that $A_t$ and $\tilde{A}$ are conditionally independent; the fourth inequality uses that $A_t$ is conditionally jointly independent of $\tilde{A}$ and $\Yb=(\Yb_1,\ldots,\Yb_K)$; the fifth inequality follows from the KL divergence form of mutual information.

Now we are ready to bound $\E_t[p_{\tilde{A}} - p_{A_t}]$ in terms of $\I_t(\tilde{A};{A}_t, \Yb_{{A}_{t}})$.
\begin{align*}
    \E_t[p_{\tilde{A}} -  p_{A_t}] 
    =& \E_t[\Yb_{\tilde{A}} -  \Yb_{A_t}] \\
    =& \sum_a\Prob_t(\tilde{A} = a)\E_t[\Yb_{a}|\tilde{A} = a] -\sum_a\Prob_t(A_t = a)\E_t[\Yb_{a}|A_t = a] \\
    =& \sum_a\Prob_t(\tilde{A} = a)\E_t[\Yb_{a}|\tilde{A} = a] -\sum_a\Prob_t(A_t = a)\E_t[\Yb_{a}] \\
    =& \sum_a \sqrt{\Prob_t(A_t = a)\Prob_t(\tilde{A} = a)}\left(\E_t[\Yb_{a}|\tilde{A} = a] -\E_t[\Yb_{a}]\right) \\
    &+
    \sum_a\left(\sqrt{\Prob_t(\tilde{A} = a)}-\sqrt{\Prob_t(A_t = a)}\right)\left(\sqrt{\Prob_t(\tilde{A} = a)}\E_t[\Yb_{a}|\tilde{A} = a]
    +\sqrt{\Prob_t(A_t = a)}\E_t[\Yb_{a}]\right) \\
    \leq& \sqrt{K\sum_a \Prob_t(A_t = a)\Prob_t(\tilde{A} = a)\left(\E_t[\Yb_{a}|\tilde{A} = a] -\E_t[\Yb_{a}]\right)^2} \\
    &+ \sum_a \left|\sqrt{\Prob_t(\tilde{A}=a)}
    -\sqrt{\Prob_t(A_t=a)}\right|
    \left(\sqrt{\Prob_t(\tilde{A}=a)} + \sqrt{\Prob_t(A_t=a)}\right)\\
    \leq& \sqrt{K\sum_{a,\tilde{a}} \Prob_t(A_t = a)\Prob_t(\tilde{A} = \tilde{a})\left(\E_t[\Yb_{a}|\tilde{A} = \tilde{a}] -\E_t[\Yb_{a}]\right)^2} 
    + \sum_a \left|\Prob_t(\tilde{A}=a) - \Prob_t(A_t=a)\right|\\ 
    \leq& \sqrt{\frac{K}{2}\sum_{a,\tilde{a}} \Prob_t(A_t = a)\Prob_t(\tilde{A} = \tilde{a})\KL\left(\Prob_t(\Yb_{a} \mid \tilde{A} = \tilde{a}) \| \Prob_t(\Yb_{a})\right)} 
    + \sum_a\left|\Prob_t(\tilde{A}_t=a) - \Prob_t(A_t=a)\right|\\
    =& \sqrt{\frac{K}{2}\I_t(\tilde{A};{A}_t, \Yb_{{A}_{t}})} + \sum_a\left|\Prob_t(\tilde{A}_t=a) - \Prob_t(A_t=a)\right|
\end{align*}
where the third inequality uses that $A_t$ is conditionally independent of $\Yb=(\Yb_1,\ldots,\Yb_K)$; the first inequality follows from Cauchy-Schwarz inequality and $\Yb_a\in\{0,1\}$; the third inequality uses Pinsker's inequality and $\Prob_t(\tilde{A}=a)=\Prob_t(\tilde{A}_t=a)$. 
Hence,
\begin{align*}
    \E[p_{\tilde{A}} - p_{A_t}] &= \E[\E_t[p_{\tilde{A}} - p_{A_t}]] \\
    &\leq \E\left[\sqrt{\frac{K}{2}\I_t(\tilde{A};{A}_t, \Yb_{{A}_{t}})}\right] + \E\left[\sum_a\left|\Prob_t(\tilde{A}_t=a) - \Prob_t(A_t=a)\right|\right]\\
    &\leq \sqrt{\frac{K}{2}\E\left[\I_t(\tilde{A};{A}_t, \Yb_{{A}_{t}})\right]} +  \E\left[\sqrt{2 \KL(\Prob(\tilde{p}^t \in \cdot | H_t)\| \Prob(\hat{p}^t \in \cdot | H_t))}\right] \\
    &\leq \sqrt{\frac{K}{2}\E\left[\I_t(\tilde{A};{A}_t, \Yb_{{A}_{t}})\right]} + \E\left[\sqrt{2 \KL(\Prob(\tilde{\Yb}_{1:\tau} \in \cdot | H_t)\| \Prob(\hat{\Yb}^t_{1:\tau} \in \cdot | H_t))}\right] \\
    &\leq \sqrt{\frac{K}{2}\E\left[\I_t(\tilde{A};{A}_t, \Yb_{{A}_{t}})\right]} + \sqrt{2 \epsilon}
\end{align*}
where the second inequalities follows from Jensen's inequality and Pinsker's inequality and the third inequality uses the information processing inequality,
and thus
\begin{align*}
    \E[p_{A^*} -  p_{A_t}] = \E[p_{A^*} -  p_{\tilde{A}}] + \E[p_{\tilde{A}} - p_{A_t}]  \leq  \frac{K}{\sqrt{2\tau}} + \sqrt{\frac{K}{2} \E\left[\I_t(\tilde{A};{A}_t, \Yb_{{A}_{t}})\right]} + \sqrt{2 \epsilon}.
\end{align*}

Using an analysis similar to  \citet{lu2021reinforcement},
\begin{align*}
\Regret(T) 
=& \sum_{t=0}^{T-1} \E[p_{A_*} - p_{A_t}] \\
\leq& \sum_{t=0}^{T-1} \left(\frac{K}{\sqrt{2\tau}} + \sqrt{\frac{K}{2} \E\left[\I_t(\tilde{A};{A}_t, \Yb_{{A}_{t}})\right]} + \sqrt{2 \epsilon}\right) \\
\leq& \sqrt{\frac{K}{2} T}  \sqrt{\sum_{t=0}^{T-1}\E\left[\I_t(\tilde{A};{A}_t, \Yb_{{A}_{t}})\right] } + \left(\frac{K}{\sqrt{2\tau}} + \sqrt{2\epsilon}\right) T\\
\leq& \sqrt{\frac{K}{2} T}  \sqrt{\I(\tilde{A};A_0, \Yb_{A_0},\ldots, A_{T-1}, \Yb_{A_{T-1}})} + \left(\frac{K}{\sqrt{2\tau}} + \sqrt{2\epsilon}\right) T\\
=& \sqrt{\frac{K}{2} T}  \sqrt{\H(\tilde{A}) - \H(\tilde{A} \mid A_0, \Yb_{A_0},\ldots, A_{T-1}, \Yb_{A_{T-1}})} + \left(\frac{K}{\sqrt{2\tau}} + \sqrt{2\epsilon}\right) T\\
\leq& \sqrt{\frac{K}{2} T\log(K)}  + \left(\frac{K}{\sqrt{2\tau}} + \sqrt{2\epsilon}\right) T
\end{align*}
where the second inequality uses Cauchy–Schwarz inequality; the third inequality follows from the chain rule for mutual information; the last inequality follows from the non-negativity of entropy. This completes the proof.
\end{proof}

\end{document}